\pgfplotsset{compat=newest}
\DeclareMathOperator*{\argmin}{arg\,min}
\newcommand\blfootnote[1]{%
  \begingroup
  \renewcommand\thefootnote{}\footnote{#1}%
  \addtocounter{footnote}{-1}%
  \endgroup
}
\begin{document}
\title{Sparse Private LASSO Logistic Regression}
\author{Amol Khanna\inst{1}\orcidID{0000-0002-5566-095X} \and 
Fred Lu\inst{1,2}\orcidID{0000-0003-1026-5734} \and 
Edward Raff\inst{1,2}\orcidID{0000-0002-9900-1972} \and
Brian Testa\inst{3}\orcidID{0000-0003-2349-9564}}
\authorrunning{A. Khanna et al.}
\institute{Booz Allen Hamilton, Annapolis Junction MD 20701, USA 
\email{\{Khanna\_Amol,Lu\_Fred,Raff\_Edward\}@bah.com} \and
University of Maryland, Baltimore County MD 21250, USA \and 
Air Force Research Laboratory, Rome NY 13441, USA \\
\email{Brian.Testa.1@us.af.mil}}
\maketitle              %
\begin{abstract}
LASSO regularized logistic regression is particularly useful for its built-in feature selection, allowing coefficients to be removed from deployment and producing sparse solutions. Differentially private versions of LASSO logistic regression have been developed, but generally produce dense solutions, reducing the intrinsic utility of the LASSO penalty. In this paper, we present a differentially private method for sparse logistic regression that maintains hard zeros. Our key insight is to first train a non-private LASSO logistic regression model to determine an appropriate privatized number of non-zero coefficients to use in final model selection. To demonstrate our method's performance, we run experiments on synthetic and real-world datasets. 

\keywords{Sparse \and Differential Privacy \and Regression}
\end{abstract}

\section{Introduction}

\blfootnote{Approved for Public Release; Distribution Unlimited. PA Number: AFRL-2023-1816}Machine learning is increasingly being used in applications which require binary predictions while retaining the privacy of training data. For example, it can be used to speed up medical diagnosis and detect potential credit defaults, but each of these tasks employs sensitive datasets which must remain private when a model is deployed \cite{medical-machine-learning,financial-machine-learning}. 

One common statistical tool used for binary prediction is logistic regression (LR). It involves solving an empirical risk minimization problem to find a weight vector that can be used for prediction. Specifically, given a $p$-dimensional dataset $\{\mathbf{x}_1, \ldots, \mathbf{x}_n \} \in \mathbb{R}^{p}$ and labels $\{y_1, \ldots, y_n \} \in \{0, 1\}$, logistic regression solves 
\begin{equation}
\label{eq:1}
    \widehat{\mathbf{w}} \in \argmin_{\mathbf{w} \in \mathbb{R}^p} \frac{1}{n} \sum_{i = 1}^{n} -y_i \log  \sigma(\mathbf{w} \cdot \mathbf{x}_i) - (1 - y_i) \log (1 - \sigma(\mathbf{w} \cdot \mathbf{x}_i))
\end{equation}
where $\sigma(u) = \frac{1}{1 + \exp \{-u\}}$ is the sigmoid function. After finding $\widehat{\mathbf{w}}$, a future example $\mathbf{x}$ is classified as $1$ if $\sigma(\widehat{\mathbf{w}} \cdot \mathbf{x}) > 0.5$; otherwise, it is classified as $0$ \cite{logistic-regression}. The simplicity of the logistic regression's weight leads it to being interpretable, so it can be used in applications where explainable decisionmaking is required \cite{interpretable-ml,logistic-regression}.

In this work, we focus on the $L_1$-constrained (a.k.a., LASSO) setting. In its constrained form, LASSO reduces the $L_1$-norm of $\widehat{\mathbf{w}}$ by altering \hyperref[eq:1]{Eq. 1} to
\begin{equation}
    \label{eq:2}
    \widehat{\mathbf{w}} \in \argmin_{ \mathbf{w} \in \mathbb{R}^p:\ \lVert \mathbf{w} \rVert_1 \leq \lambda } \frac{1}{n} \sum_{i = 1}^{n} -y_i \log \sigma(\mathbf{w} \cdot \mathbf{x}_i) - (1 - y_i) \log (1 - \sigma(\mathbf{w} \cdot \mathbf{x}_i))
\end{equation}
where $\lambda$ is a hyperparameter controlling the constraint set \cite{constrained-lasso}. We can find $\widehat{\mathbf{w}}$ using the Frank-Wolfe algorithm, which is a projection-free method that iteratively takes steps in the direction of the largest negative gradient of a first-order approximation of the objective function \cite{frank-wolfe-original}. 

The solution to \hyperref[eq:1]{Eq. 2} is usually sparse, with exact $0$-values in the coefficients of $\widehat{\mathbf{w}}$. This is beneficial because sparse solutions are resistant to overfitting, enable higher-throughput deployments by discarding features with $0$-value coefficients, and allow for greater interpretability to practitioners \cite{lasso-advantages}.  

However, the model's weights may contain features which can be attacked to determine characteristics about the training data \cite{privacy-attacks}. In many applications of machine learning on sensitive datasets, such as those in medicine and finance, organizations must ensure the privacy of individuals in the training set, so using standard LR may not be possible. Instead, they can use differentially private LR. 

Differential privacy is a technique which provides a statistical guarantee of training data privacy. Specifically, given parameters $\epsilon$ and $\delta$, on any two datasets $D$ and $D'$ differing on one example, an approximate differentially private algorithm $\mathcal{A}$ satisfies $\Pr \left[ \mathcal{A}(D) \in O \right] \leq \exp\{ \epsilon\} \Pr \left[ \mathcal{A}(D') \in O \right] + \delta$ for any $O \subseteq \text{image}(\mathcal{A})$ \cite{dp-definition}. Note that lower values of $\epsilon$ and $\delta$ correspond to stronger privacy. 

Many different techniques have been developed for differentially private LR \cite{dp-logreg1,dp-logreg2,dp-logreg3,dp-logreg4}. However, implementations of LASSO-regularized LR with differential privacy tend to destroy the sparsity of $\widehat{\mathbf{w}}$, at which point there is little reason to use LASSO. We engineer a simple solution to rectify this in this work. %

 To the best of our knowledge, this is the first work to identify that outputs from differentially private LASSO LR algorithms can be dense and develop an intuitive method to produce a sparse LR weight from private LASSO. Results demonstrate our technique's sparsity on a variety of datasets, indicating its potential use in binary prediction applications which require sparse private weights. 

We are also the first to show that cross-entropy loss satisfies the conditions required by Talwar et al.'s private Frank-Wolfe LASSO linear regression algorithm so it can be used for logistic regression \cite{private-frank-wolfe}. We prove that the utility bound they find for linear regression extends to logistic regression. 

Finally, we extend our results to multinomial LR in \hyperref[sec:multinomial]{Appendix A}. To the best of our knowledge, this is the first work which has considered sparse private multinomial LR. After modifying Talwar et al.'s algorithm to handle a weight matrix instead of a weight vector, we prove its privacy. 

In \hyperref[sec:related-work]{Section 2}, we discuss works which have considered sparse private LR. In \hyperref[sec:methods]{Section 3} we outline our method and proofs for privacy and utility. \hyperref[sec:results]{Section 4} provides results for our method, demonstrating it is superior to prior works.  

\section{Related Work}
\label{sec:related-work}

In this work, we focus on developing a differentially private LASSO LR algorithm that produces sparse outputs. There are four works which have approached sparse differentially private regression. A summary of each is provided here. %

Kifer et al. use a feature selection algorithm to estimate a sparse support set for model parameters by finding the most important features on many partitions of the data. They then run a training algorithm with objective perturbation to produce a private weight on this support set. They name this the two-stage procedure \cite{two-stage}. This method can produce sparse solutions, but has some significant conditions. First, it requires that partitions of data agree on their sparse support set estimates; if they do not, which is often the case with high-dimensional datasets, the features it selects will be suboptimal. Additionally, their theory only holds for datsets which follow the restricted strong convexity and mutual incoherence properties, which is not true in many real-world datasets \cite{private-frank-wolfe}. %

Talwar et al. develop a private variant of the Frank-Wolfe algorithm using the exponential mechanism \cite{private-frank-wolfe}. They show their algorithm can be used for LASSO-regularized linear regression; we prove it can also be extended to LASSO-regularized logistic regression with the same utility guarantee. We use this algorithm directly in our solution; however, their na\"ive method produces a dense solution for two reasons: 

\begin{enumerate}
    \item 
    Prior to optimization, they initialize their weight randomly within the constraint set. Since private Frank-Wolfe is a greedy algorithm which only modifies the component with highest (noised) absolute gradient per iteration, components with negligible gradients that may not be useful for prediction will not be optimized. However, due to nonzero initialization, these coefficients will remain nonzero in the final solution. 
    \item 
    
    The above issue can be rectified by setting the initial weight to $\mathbf{0}$. This does not disturb the proof of privacy presented in the paper. But even after doing this, the gradient perturbation for private optimization causes many coefficients to be modified away from zero, significantly reducing sparsity. We address this issue by implementing a post-processing technique to set an appropriate number of the nonzero coefficients produced by the private Frank-Wolfe algorithm back to zero. 
\end{enumerate}

Wang \& Zhang create a private ADMM method using objective perturbation and apply their technique to LASSO- and $L_{1/2}$-regularized LR \cite{admm}. They showed that their algorithm was able to maintain sparsity on a synthetic dataset with 10,000 examples and 100 features; however, we were unable to replicate this result. It was difficult to use their algorithm because it has six hyperparameters which can affect the privacy, sparsity, and accuracy of the weight vector. The method we present in this work has three hyperparameters which are all set heuristically. 

 Another recent work has approached sparse private LR by optimizing an $L_0$-constrained learning problem \cite{dp-ight}. This method produces sparse solutions by applying differentially private stochastic descent followed by iterative gradient hard thresholding (IGHT) for many iterations \cite{dp-sgd}. While the authors claim they achieve superior utility to the private Frank-Wolfe algorithm, there are three conditions to their result. First, they assume that the design matrix $\mathbf{X}$ is drawn from a sub-gaussian distribution, which real-world datasets may not follow. Second, their guarantees rely on the loss function's Lipschitz constant with respect to the $L_2$ norm, which is often much worse than $L_1$-Lipschitzness \cite{private-frank-wolfe}. Finally, their big-$\mathcal{O}$ summary hides their algorithm's dependence on the condition number of the objective function's Hessian matrix. In high dimensional problems where the condition number of the Hessian is large, the noise added by gradient perturbation will often overwhelm the learned components, causing the algorithm to set these components back to 0 and lose learned information. Our results show that we outperform $L_0$-constrained private LR. 

\section{Methods}
\label{sec:methods}

We will now detail our new methods for private LASSO LR. First, we will review the standard private Frank-Wolfe algorithm for private LASSO LR and extend prior proofs to the logistic loss. Given this theoretically grounded foundation, we leverage the result as a sub-routine in a new \hyperref[alg:sparsifier]{\texttt{Sparsifier}} algorithm that is simple to implement and considerably more effective at producing sparse solutions. 

\subsection{Private LASSO for Logistic Regression}
Our technique for sparse private LR relies on the private LASSO method using the Frank-Wolfe algorithm. This method is shown in \hyperref[alg:private-frank-wolfe]{\texttt{Private LASSO}}. 

\begin{algorithm}[t]
    \caption{\texttt{Private LASSO}}
    \label{alg:private-frank-wolfe}
    \begin{algorithmic}
        \REQUIRE Privacy Parameters: $\epsilon > 0, 0 < \delta \leq 1$; Constraint Parameter: $\lambda > 0$; Iteration Parameter: $T$; Dataset: $D$ where $\lVert \mathbf{x}_i \rVert_\infty \leq 1$; Loss Function: $\mathcal{L}(\mathbf{w}; D) = \frac{1}{n} \sum_{i = 1}^{n} \mathcal{L}(\mathbf{w}; d_i)$ where $\mathcal{L}(\mathbf{w}; d_i)$ is $L$-Lipschitz with respect to the $L_1$ norm. 
        \STATE
        \STATE $S \gets \lambda * \{\text{Vertices of the Unit } L_1 \text{ Ball}\}$
        \STATE $\widehat{\mathbf{w}}_1 \gets \mathbf{0}$
        \FOR{$t = 1$ to $T - 1$} 
            \FOR{$\mathbf{s} \in S$}
                \STATE $\alpha_\mathbf{s} \gets \langle \mathbf{s}, \nabla \mathcal{L}(\widehat{\mathbf{w}}_t; D) \rangle + \text{Lap} \left( \frac{\lambda L \sqrt{8T \log(1/\delta)}}{n \epsilon} \right)$
            \ENDFOR
            \STATE $\widetilde{\mathbf{w}}_t \gets \argmin_{\mathbf{s} \in S} \alpha_\mathbf{s}$
            \STATE $\widehat{\mathbf{w}}_{t + 1} \gets (1 - \mu_t)\widehat{\mathbf{w}}_t + \mu_t \widetilde{\mathbf{w}}_t \text{ where } \mu_t = \frac{2}{t + 2}$
        \ENDFOR
        \STATE Output $\widehat{\mathbf{w}}_T$
    \end{algorithmic}
\end{algorithm}

To use \hyperref[alg:private-frank-wolfe]{\texttt{Private LASSO}}, the dataset must satisfy $\lVert \mathbf{x}_i \rVert_\infty \leq 1$, and the component loss function $\mathcal{L}(\mathbf{w}; d_i)$ must be Lipschitz with respect to the $L_1$ norm. The first condition can be met by scaling each feature such that the maximum absolute value of each feature will be $1$. The second condition is specific to the loss function. \cite{private-frank-wolfe} show the Lipschitzness of the mean-squared error loss; here, we show it for the binary cross-entropy loss. 

\begin{corollary}
\label{thm:lipschitzness}
When each data point $\mathbf{x}_i$ satisfies $\lVert \mathbf{x}_i \rVert_\infty \leq 1$, $\mathcal{L}(\mathbf{w}; d_i) = - y_i \log \sigma(\mathbf{w} \cdot \mathbf{x}_i) - (1 - y_i) \log (1 - \sigma (\mathbf{w} \cdot \mathbf{x}_i))$ has Lipschitz constant $1$ with respect to the $L_1$ norm. 
\end{corollary}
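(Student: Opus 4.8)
The plan is to invoke the standard duality fact that a differentiable function $f$ is $L$-Lipschitz with respect to the $L_1$ norm if and only if $\sup_{\mathbf{w}} \lVert \nabla f(\mathbf{w}) \rVert_\infty \le L$, since $\ell_\infty$ is the dual norm of $\ell_1$. Thus the entire claim reduces to computing the gradient $\nabla_{\mathbf{w}}\, \mathcal{L}(\mathbf{w}; d_i)$ of the per-example cross-entropy loss and bounding its $L_\infty$ norm by $1$ uniformly in $\mathbf{w}$.

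First I would differentiate. Using $\sigma'(u) = \sigma(u)(1-\sigma(u))$, so that $\tfrac{d}{du}\log\sigma(u) = 1 - \sigma(u)$ and $\tfrac{d}{du}\log(1-\sigma(u)) = -\sigma(u)$, the chain rule collapses the two terms into the familiar logistic-regression gradient
\[
\nabla_{\mathbf{w}}\, \mathcal{L}(\mathbf{w}; d_i) = \bigl(\sigma(\mathbf{w}\cdot\mathbf{x}_i) - y_i\bigr)\,\mathbf{x}_i .
\]
Next I would take norms: $\lVert \nabla_{\mathbf{w}}\mathcal{L}(\mathbf{w}; d_i)\rVert_\infty = \lvert\sigma(\mathbf{w}\cdot\mathbf{x}_i) - y_i\rvert\cdot\lVert\mathbf{x}_i\rVert_\infty$. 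By hypothesis $\lVert\mathbf{x}_i\rVert_\infty \le 1$, and since $y_i \in \{0,1\}$ while $\sigma(\cdot) \in (0,1)$, the scalar residual $\lvert\sigma(\mathbf{w}\cdot\mathbf{x}_i) - y_i\rvert$ is at most $1$. Multiplying the two bounds yields $\lVert \nabla_{\mathbf{w}}\mathcal{L}(\mathbf{w}; d_i)\rVert_\infty \le 1$ for every $\mathbf{w}$, which is exactly the statement. If one prefers not to cite the dual-norm lemma, the same gradient bound plugs directly into the mean value inequality $\lvert\mathcal{L}(\mathbf{w}; d_i) - \mathcal{L}(\mathbf{w}'; d_i)\rvert \le \bigl(\sup_{\mathbf{v}}\lVert\nabla_{\mathbf{w}}\mathcal{L}(\mathbf{v}; d_i)\rVert_\infty\bigr)\,\lVert \mathbf{w} - \mathbf{w}'\rVert_1$.

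I do not expect a genuine obstacle here: once the dual-norm characterization of $L_1$-Lipschitzness is in hand, the result is a one-line gradient computation. The only points meriting a little care are (i) confirming $\mathcal{L}(\,\cdot\,; d_i)$ is everywhere differentiable in $\mathbf{w}$, which holds because $\sigma$ is smooth and valued in the open interval $(0,1)$, so neither $\log\sigma$ nor $\log(1-\sigma)$ hits a singularity; and (ii) checking that the scalar bound $\lvert\sigma(\mathbf{w}\cdot\mathbf{x}_i) - y_i\rvert \le 1$ is tight enough to give the constant $1$ — it is, since both $y_i$ and $\sigma(\mathbf{w}\cdot\mathbf{x}_i)$ lie in $[0,1]$, and the value $1$ is approached as $\mathbf{w}\cdot\mathbf{x}_i \to \pm\infty$, so no smaller constant works in general.
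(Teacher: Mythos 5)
Your proof is correct and follows essentially the same route as the paper: compute $\nabla_{\mathbf{w}}\mathcal{L}(\mathbf{w};d_i) = (\sigma(\mathbf{w}\cdot\mathbf{x}_i)-y_i)\mathbf{x}_i$ and bound its $L_\infty$ norm by $1$ using $\lvert\sigma(\mathbf{w}\cdot\mathbf{x}_i)-y_i\rvert\le 1$ and $\lVert\mathbf{x}_i\rVert_\infty\le 1$. The only cosmetic difference is how the gradient-norm criterion is justified: the paper invokes Shalev-Shwartz's lemma for convex functions (Lipschitz constant w.r.t.\ $L_1$ equals the supremum of the $L_\infty$ norm of the subgradient), whereas you appeal to the dual-norm / mean-value characterization for differentiable functions, which does not need convexity.
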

\begin{proof}
Shalev-Shwartz proves that if $\mathcal{L}(\mathbf{w}; d_i)$ is a convex function, its Lipschitz constant with respect to the $L_1$ norm is equivalent to the maximum value of the $L_\infty$ norm of its subgradient in our feasible set \cite{convex-lipschitz}. $\mathcal{L}(\mathbf{w}; d_i)$ is a convex function, and since it is differentiable, its subgradient at any point is simply equal to its gradient at the same point \cite{logistic-regression}. 

It can be shown that $\nabla \mathcal{L}(\mathbf{w}; d_i) = (\widehat{y} - y_i)\mathbf{x}_i$, where $\widehat{y} = \sigma(\mathbf{w} \cdot \mathbf{x}_i)$ \cite{logistic-regression}. Since $|\widehat{y} - y_i|$ can be a maximum of $1$ and we scale $\mathbf{x}_i$ so each of its components is between $-1$ and $1$, the maximum value of $\lVert (\widehat{y} - y_i)\mathbf{x}_i \rVert_\infty$ is $1$. This implies that $\mathcal{L}(\mathbf{w}; d_i)$ is $1$-Lipschitz with respect to the $L_1$ norm. 
\end{proof}

We also show that using \hyperref[alg:private-frank-wolfe]{\texttt{Private LASSO}} with the cross-entropy loss satisfies the same utility guarantee which Talwar et al. proved for the mean squared error loss \cite{private-frank-wolfe}. We present the curvature constant and two corollaries which are proved in prior works before showing our utility bound and its proof. 

\begin{definition}
    Let $\mathcal{C}$ be the feasible set of $\mathbf{w}$. For $\mathcal{L}: \mathcal{C} \rightarrow \mathbb{R}$, define the curvature constant $\Gamma_\mathcal{L}$ as below:
    $$\Gamma_\mathcal{L} = \sup_{\mathclap{\substack{\gamma \in (0, 1], \\ \mathbf{w}_1, \mathbf{w}_2 \in \mathcal{C}, \\ \mathbf{w}_3 = \mathbf{w}_1 + \gamma(\mathbf{w}_2 - \mathbf{w}_1)}}} \ \ \frac{2}{\gamma^2}(\mathcal{L}(\mathbf{w}_3) - \mathcal{L}(\mathbf{w}_1) - \langle \mathbf{w}_3 - \mathbf{w}_1, \nabla \mathcal{L}(\mathbf{w}_1) \rangle)$$
\end{definition}

\begin{corollary}
\label{cor:curvature}
    When $\mathcal{C} = \{\mathbf{w} \in \mathbb{R}^p:\ \lVert \mathbf{w} \rVert_1 \leq \lambda \}$, $\Gamma_\mathcal{L}$ is upper bounded by $\beta \lambda^2$, where $$\beta = \max_{\mathbf{w} \in \mathcal{C}, \lVert \mathbf{v} \rVert_1 = 1} \lVert \nabla^2 \mathcal{L}(\mathbf{w}) \cdot \mathbf{v} \rVert_\infty.$$ 
\end{corollary}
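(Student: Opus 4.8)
The plan is to bound the curvature constant $\Gamma_\mathcal{L}$ by replacing the finite difference in its definition with a second-order Taylor remainder, and then controlling that remainder using the Hessian of $\mathcal{L}$ together with the diameter of the constraint set $\mathcal{C}$. First, fix $\gamma \in (0,1]$ and $\mathbf{w}_1, \mathbf{w}_2 \in \mathcal{C}$, and set $\mathbf{w}_3 = \mathbf{w}_1 + \gamma(\mathbf{w}_2 - \mathbf{w}_1)$. Since $\mathcal{L}$ is twice differentiable (the binary cross-entropy composed with the sigmoid is smooth), Taylor's theorem with the integral form of the remainder gives
$$\mathcal{L}(\mathbf{w}_3) - \mathcal{L}(\mathbf{w}_1) - \langle \mathbf{w}_3 - \mathbf{w}_1, \nabla \mathcal{L}(\mathbf{w}_1) \rangle = \int_0^1 (1-\tau)\, (\mathbf{w}_3 - \mathbf{w}_1)^\top \nabla^2 \mathcal{L}(\mathbf{w}_1 + \tau(\mathbf{w}_3 - \mathbf{w}_1)) (\mathbf{w}_3 - \mathbf{w}_1)\, d\tau.$$
Because $\mathbf{w}_3 - \mathbf{w}_1 = \gamma(\mathbf{w}_2 - \mathbf{w}_1)$, the factor $\gamma^2$ pulls out of the quadratic form and cancels the $2/\gamma^2$ in the definition of $\Gamma_\mathcal{L}$, leaving a bound that depends only on $\mathbf{w}_1, \mathbf{w}_2$ and the Hessian along the segment between them.

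Next, I would bound the quadratic form $\mathbf{u}^\top \nabla^2 \mathcal{L}(\mathbf{w}) \mathbf{u}$ with $\mathbf{u} = \mathbf{w}_2 - \mathbf{w}_1$ by an $\ell_1$/$\ell_\infty$ Hölder step: $|\mathbf{u}^\top \nabla^2 \mathcal{L}(\mathbf{w}) \mathbf{u}| \le \lVert \mathbf{u} \rVert_1 \, \lVert \nabla^2 \mathcal{L}(\mathbf{w}) \mathbf{u} \rVert_\infty$, and then factor $\lVert \nabla^2 \mathcal{L}(\mathbf{w}) \mathbf{u} \rVert_\infty \le \lVert \mathbf{u} \rVert_1 \cdot \max_{\lVert \mathbf{v} \rVert_1 = 1} \lVert \nabla^2 \mathcal{L}(\mathbf{w}) \mathbf{v} \rVert_\infty \le \lVert \mathbf{u} \rVert_1 \,\beta$ by definition of $\beta$ (taking the max over $\mathbf{w} \in \mathcal{C}$, which is where the evaluation point $\mathbf{w}_1 + \tau(\mathbf{w}_3 - \mathbf{w}_1)$ lives by convexity of $\mathcal{C}$). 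Since $\mathbf{w}_1, \mathbf{w}_2 \in \mathcal{C} = \{\mathbf{w} : \lVert \mathbf{w} \rVert_1 \le \lambda\}$, the triangle inequality gives $\lVert \mathbf{u} \rVert_1 = \lVert \mathbf{w}_2 - \mathbf{w}_1 \rVert_1 \le 2\lambda$; combined with $\int_0^1 (1-\tau)\,d\tau = \tfrac12$, this yields $\Gamma_\mathcal{L} \le \tfrac{2}{\gamma^2}\cdot\tfrac12 \cdot \gamma^2 \cdot (2\lambda)^2 \beta = 4\beta\lambda^2$, which is off by a constant factor from the claimed $\beta\lambda^2$. To recover the sharp constant I would tighten the diameter estimate: by convexity one may recenter so that the relevant displacement is $\lVert \mathbf{w}_3 - \mathbf{w}_1\rVert_1 \le \lambda$ rather than $2\lambda$ when $\mathbf{0} \in \mathcal{C}$, or more directly invoke the known fact (as in Clarkson / Jaggi's analysis of Frank-Wolfe, and Talwar et al.) that $\Gamma_\mathcal{L} \le \operatorname{diam}_{\lVert\cdot\rVert_1}(\mathcal{C})^2 \cdot \sup_{\mathbf{w}}\lVert\nabla^2\mathcal{L}(\mathbf{w})\rVert_{1\to\infty}$ with the operator norm interpreted via $\mathbf{v}$ ranging over the difference set — I would simply cite the corresponding lemma since the corollary is attributed to prior work.

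The main obstacle is purely bookkeeping with the constant: whether the bound is $\beta\lambda^2$, $4\beta\lambda^2$, or something in between depends on the precise normalization of "diameter" used in the cited Frank-Wolfe curvature lemma and on whether $\mathbf{0}$ is taken to lie in $\mathcal{C}$. Since the statement is explicitly credited to prior work, the cleanest route is to quote the general curvature-versus-Hessian inequality from Jaggi's Frank-Wolfe analysis and specialize the norm to $\ell_1$, verifying only that the binary cross-entropy's Hessian is finite on $\mathcal{C}$ (which is immediate since $\nabla^2\mathcal{L}(\mathbf{w}) = \tfrac1n\sum_i \sigma(\mathbf{w}\cdot\mathbf{x}_i)(1-\sigma(\mathbf{w}\cdot\mathbf{x}_i))\,\mathbf{x}_i\mathbf{x}_i^\top$ is a bounded PSD matrix). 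No genuinely hard estimate is involved; the substance is matching conventions so the stated constant comes out exactly.
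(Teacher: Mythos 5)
The paper's own ``proof'' of this corollary is a one-line citation: it invokes Remark~1 of Talwar et al.'s extended paper (tracing back to Clarkson and Jaggi) and does no computation. So your fallback option --- quote the general curvature-versus-Hessian lemma and specialize to the $\ell_1$ ball --- is exactly what the paper does, and your Taylor-with-integral-remainder derivation plus the H\"older step $|\mathbf{u}^\top \nabla^2\mathcal{L}\,\mathbf{u}| \le \lVert\mathbf{u}\rVert_1 \lVert \nabla^2\mathcal{L}\,\mathbf{u}\rVert_\infty$ is the right way to make that lemma self-contained. Up to the constant, your argument is sound and more informative than the paper's.

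The genuine problem is the step you propose to recover the constant $1$ in $\beta\lambda^2$: the ``recentering'' idea does not work. The quantity inside the supremum is a Bregman-type difference evaluated at the pair $(\mathbf{w}_1,\mathbf{w}_3)$, and the supremum ranges over all pairs in $\mathcal{C}$; taking $\mathbf{w}_1=\lambda\mathbf{e}_1$, $\mathbf{w}_2=-\lambda\mathbf{e}_1$ gives $\lVert\mathbf{w}_3-\mathbf{w}_1\rVert_1 = 2\gamma\lambda$, and no translation or convexity argument reduces this below the diameter $2\lambda$ of the ball. In fact the sharper constant cannot be obtained by this route at all: for the one-dimensional quadratic $\mathcal{L}(w)=\tfrac{h}{2}w^2$ on $\mathcal{C}=[-\lambda,\lambda]$ one computes $\Gamma_\mathcal{L}=4h\lambda^2$ while $\beta=h$, so $\Gamma_\mathcal{L}=4\beta\lambda^2$ exactly. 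Hence your $4\beta\lambda^2$ is the correct general bound, and the factor-of-$4$ discrepancy with the statement must be absorbed by the normalization conventions in the cited remark (e.g., whether $\mathbf{v}$ ranges over the difference set $\mathcal{C}-\mathcal{C}$ or whether $\lambda$ denotes the diameter) rather than by any tightening of your estimate. For the paper's downstream use the point is harmless --- the theorem that invokes this corollary only needs $\Gamma_\mathcal{L}=\mathcal{O}(1)$, and $4\beta\lambda^2$ with $\beta\le\tfrac14$, $\lambda=1$ serves just as well --- but as a proof of the literal statement with constant $1$, your argument (and arguably the statement itself, under its own definition of $\beta$) falls short, and you should either cite the remark with its exact normalization or restate the bound as $\mathcal{O}(\beta\lambda^2)$.
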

\begin{proof}
    This is a direct application of Remark 1 in Talwar et al.'s extended paper describing the private Frank-Wolfe algorithm, which was originally developed by Clarkson and Jaggi \cite{private-frank-wolfe,remark-1,remark-2}. 
\end{proof}

\begin{corollary}
\label{cor:utility}
    Let $L$ be the Lipschitz constant of $\mathcal{L}(\mathbf{w}; d_i)$, $p$ be the number of features in the dataset, $\mathcal{C}$ be $\{\mathbf{w} \in \mathbb{R}^p:\ \lVert \mathbf{w} \rVert_1 \leq \lambda \}$, and $\Gamma_\mathcal{L}$ be an upper bound on the curvature constant defined above. Then if we set $T = \frac{\Gamma_\mathcal{L}^{2/3} (n\epsilon)^{2/3}}{(L \lambda)^{2/3}}$,
    $$\mathbb{E}[\mathcal{L}(\widehat{\mathbf{w}}; D) - \min_{\mathbf{w}^* \in \mathcal{C}} \mathcal{L}(\mathbf{w}^*; D)] = \mathcal{O}\left(\frac{\Gamma_\mathcal{L}^{\frac{1}{3}}(L\lambda)^{\frac{2}{3}}\log(\frac{np}{\delta})}{(n\epsilon)^{\frac{2}{3}}}\right).$$
\end{corollary}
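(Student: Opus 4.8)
The plan is to read this bound off as Talwar et al.'s utility theorem for the private Frank--Wolfe algorithm \cite{private-frank-wolfe}, instantiated for the binary cross-entropy loss; the only substantive work is checking that cross-entropy meets the abstract hypotheses that theorem requires. Those hypotheses are: (i) the objective $\mathcal{L}(\mathbf{w};D) = \frac{1}{n}\sum_{i}\mathcal{L}(\mathbf{w};d_i)$ is convex in $\mathbf{w}$; (ii) each component $\mathcal{L}(\mathbf{w};d_i)$ is $L$-Lipschitz with respect to the $L_1$ norm, which is what fixes the Laplace scale $\Delta = \lambda L\sqrt{8T\log(1/\delta)}/(n\epsilon)$ used in \texttt{Private LASSO}; (iii) the feasible set is a polytope whose vertices are exactly the set $S$ the algorithm optimizes over --- here $\mathcal{C} = \{\mathbf{w}:\ \lVert\mathbf{w}\rVert_1 \le \lambda\}$, with $|S| = 2p$; and (iv) the curvature constant $\Gamma_\mathcal{L}$ over $\mathcal{C}$ is finite. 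Item (i) is standard for cross-entropy, item (ii) is Corollary~\ref{thm:lipschitzness} (with $L = 1$ after the feature scaling), item (iii) is immediate, and item (iv) follows from Corollary~\ref{cor:curvature} together with boundedness of $\nabla^2\mathcal{L}$ on the compact set $\mathcal{C}$. With all four verified, the prescribed choice of $T$ and the stated rate are exactly the conclusion of that theorem.

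For completeness I would also reprise the argument. The first ingredient is the inexact-oracle Frank--Wolfe recursion: writing $h_t = \mathcal{L}(\widehat{\mathbf{w}}_t;D) - \min_{\mathbf{w}^*\in\mathcal{C}}\mathcal{L}(\mathbf{w}^*;D)$, the definition of $\Gamma_\mathcal{L}$ gives $\mathcal{L}(\widehat{\mathbf{w}}_{t+1};D) \le \mathcal{L}(\widehat{\mathbf{w}}_t;D) + \mu_t\langle\widetilde{\mathbf{w}}_t - \widehat{\mathbf{w}}_t,\nabla\mathcal{L}(\widehat{\mathbf{w}}_t;D)\rangle + \tfrac{1}{2}\mu_t^2\Gamma_\mathcal{L}$, and convexity, together with the additive suboptimality $\eta_t$ of the noised linear-minimization step, turns this into $h_{t+1} \le (1-\mu_t)h_t + \mu_t\eta_t + \tfrac{1}{2}\mu_t^2\Gamma_\mathcal{L}$. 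The second ingredient bounds $\eta_t$: since $\widetilde{\mathbf{w}}_t$ is the argmin over $\mathbf{s}\in S$ of $\alpha_\mathbf{s} = \langle\mathbf{s},\nabla\mathcal{L}(\widehat{\mathbf{w}}_t;D)\rangle + Z_\mathbf{s}$ with $Z_\mathbf{s}$ i.i.d.\ $\text{Lap}(\Delta)$, the true score of the selected vertex exceeds the true minimum by at most twice the largest noise magnitude, i.e.\ $\eta_t \le 2\max_{\mathbf{s}\in S}|Z_\mathbf{s}|$. The third ingredient is that the maximum of $2p$ i.i.d.\ $\text{Lap}(\Delta)$ variables has expectation $\mathcal{O}(\Delta\log p)$. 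Unrolling the recursion with $\mu_t = 2/(t+2)$, the products $\prod_{s=t+1}^{T-1}(1-\mu_s) = (t+1)(t+2)/(T(T+1))$ telescope, so the $\eta_t$ pick up coefficients summing to $\mathcal{O}(1)$ and the curvature term collapses to $\mathcal{O}(\Gamma_\mathcal{L}/T)$; taking expectations gives $\mathbb{E}[h_T] = \mathcal{O}(\Gamma_\mathcal{L}/T) + \mathcal{O}(\Delta\log p)$. Substituting $\Delta$ and then $T = \Gamma_\mathcal{L}^{2/3}(n\epsilon)^{2/3}/(L\lambda)^{2/3}$ equalizes the two terms at $\mathcal{O}(\Gamma_\mathcal{L}^{1/3}(L\lambda)^{2/3}(n\epsilon)^{-2/3})$, and folding $\log p$ and $\sqrt{\log(1/\delta)}$ (with a slack $\log n$) into a single $\log(np/\delta)$ gives the claim.

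Since both heavy ingredients --- the Frank--Wolfe-with-error recursion and the Laplace-maximum estimate --- are standard, no single computation is hard; the main obstacle is the part peculiar to our setting, namely verifying that cross-entropy truly fits the template. Concretely, convexity of the component loss is exactly what licenses the step $\langle\widetilde{\mathbf{w}}_t - \widehat{\mathbf{w}}_t,\nabla\mathcal{L}\rangle \le \min_{\mathbf{w}^*\in\mathcal{C}}\mathcal{L}(\mathbf{w}^*;D) - \mathcal{L}(\widehat{\mathbf{w}}_t;D) + \eta_t$, and the $L_1$-Lipschitz constant controlling the noise scale must be exactly the $L$ of Corollary~\ref{thm:lipschitzness}; both facts are already in hand. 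If one writes the argument out rather than citing it, the one delicate point is keeping the accumulated oracle error at $\mathcal{O}(\eta)$ instead of $\mathcal{O}(\eta\log T)$, which is why the exact telescoping identity for $\prod_{s=t+1}^{T-1}(1-\mu_s)$ is needed, after which a one-line optimization confirms that the stated $T$ is the choice that balances the curvature and noise contributions.
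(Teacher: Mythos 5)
Your proposal is correct and follows the same route as the paper, whose entire proof of this corollary is simply the citation to Talwar et al.'s utility theorem for private Frank--Wolfe; your added reprise of the inexact-oracle recursion, the Laplace-maximum bound, and the balancing choice of $T$ is a faithful sketch of that cited argument. The only minor quibble is one of framing: the corollary itself is the generic statement for any $L$-Lipschitz loss, so the cross-entropy-specific verifications (Corollaries 1--2) belong to the paper's Theorem 1 rather than here, but this does not affect correctness.
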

\begin{proof}
    This was proved by Talwar et al \cite{private-frank-wolfe}.
\end{proof}

\begin{theorem}
    When $\mathcal{C}$ is the $L_1$ unit ball, the output $\widehat{\mathbf{w}}$ of \hyperref[alg:private-frank-wolfe]{\texttt{Private LASSO}} ensures the following: 
    $$\mathbb{E}[\mathcal{L}(\widehat{\mathbf{w}}; D) - \min_{\mathbf{w} \in \mathcal{C}} \mathcal{L}(\mathbf{w}; D)] = \mathcal{O}\left( \frac{\log(\frac{np}{\delta})}{(n\epsilon)^{\frac{2}{3}}} \right)$$
    when $T$ is appropriately chosen. This is analogous to the utility bound proved for linear regression in \cite{private-frank-wolfe}. 
\end{theorem}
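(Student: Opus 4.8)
The plan is to instantiate the generic utility bound of \hyperref[cor:utility]{Corollary~5} with the problem-specific constants for cross-entropy loss on the $L_1$ unit ball, so the only real work is to show that each of $L$, $\lambda$, and $\Gamma_{\mathcal{L}}$ is $\mathcal{O}(1)$. First, $\lambda = 1$ by hypothesis (the feasible set is the $L_1$ unit ball). Second, \hyperref[thm:lipschitzness]{Corollary~2} already gives that the component loss is $1$-Lipschitz with respect to the $L_1$ norm, so $L = 1$. It remains to bound the curvature constant.

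For the curvature constant I would invoke \hyperref[cor:curvature]{Corollary~4}, which reduces the problem to bounding $\beta = \max_{\mathbf{w} \in \mathcal{C},\ \lVert \mathbf{v} \rVert_1 = 1} \lVert \nabla^2 \mathcal{L}(\mathbf{w};D) \cdot \mathbf{v} \rVert_\infty$. The key calculation is the standard logistic Hessian identity $\nabla^2 \mathcal{L}(\mathbf{w}; D) = \frac{1}{n}\sum_{i=1}^n \sigma(\mathbf{w}\cdot\mathbf{x}_i)\bigl(1 - \sigma(\mathbf{w}\cdot\mathbf{x}_i)\bigr)\,\mathbf{x}_i \mathbf{x}_i^\top$. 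Then for any $\mathbf{v}$ with $\lVert \mathbf{v}\rVert_1 = 1$ and any feasible $\mathbf{w}$, I would write $\nabla^2 \mathcal{L}(\mathbf{w};D)\cdot \mathbf{v} = \frac{1}{n}\sum_i \sigma_i(1-\sigma_i)\,(\mathbf{x}_i^\top \mathbf{v})\,\mathbf{x}_i$, and bound it coordinate-wise: $\sigma_i(1-\sigma_i) \le \tfrac14$; $|\mathbf{x}_i^\top \mathbf{v}| \le \lVert \mathbf{x}_i\rVert_\infty \lVert \mathbf{v}\rVert_1 \le 1$ by H\"older; and each coordinate of $\mathbf{x}_i$ has absolute value at most $1$. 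Combining these gives $\lVert \nabla^2 \mathcal{L}(\mathbf{w};D)\cdot\mathbf{v}\rVert_\infty \le \tfrac14$, hence $\beta \le \tfrac14$ and $\Gamma_{\mathcal{L}} \le \beta \lambda^2 \le \tfrac14 = \mathcal{O}(1)$.

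Finally I would substitute $L = \Theta(1)$, $\lambda = 1$, and $\Gamma_{\mathcal{L}} = \mathcal{O}(1)$ into the bound of \hyperref[cor:utility]{Corollary~5} with the prescribed choice $T = \Gamma_{\mathcal{L}}^{2/3}(n\epsilon)^{2/3}/(L\lambda)^{2/3}$ (which is the ``appropriately chosen'' $T$ referenced in the statement); every factor except the logarithm and the $(n\epsilon)^{-2/3}$ term collapses to a constant, yielding
$$\mathbb{E}\bigl[\mathcal{L}(\widehat{\mathbf{w}}; D) - \min_{\mathbf{w}\in\mathcal{C}}\mathcal{L}(\mathbf{w}; D)\bigr] = \mathcal{O}\!\left(\frac{\log(\tfrac{np}{\delta})}{(n\epsilon)^{2/3}}\right),$$
matching the linear-regression bound of \cite{private-frank-wolfe}. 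The main obstacle — and the only step requiring care — is the coordinate-wise control of the Hessian--vector product $\nabla^2\mathcal{L}(\mathbf{w};D)\cdot\mathbf{v}$ in the $L_\infty/L_1$ geometry; everything else is bookkeeping and direct appeals to the cited corollaries.
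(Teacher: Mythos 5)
Your proposal is correct and follows essentially the same route as the paper: instantiate the generic utility corollary with $\lambda=1$, $L=1$ from the Lipschitz corollary, and bound the curvature constant via $\beta\le\tfrac14$ using the logistic Hessian; your H\"older-based coordinate bound on $\nabla^2\mathcal{L}(\mathbf{w};D)\cdot\mathbf{v}$ is just a reorganization of the paper's term-by-term triangle-inequality estimate and yields the identical constant.
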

\begin{proof}
    This is an application of \hyperref[cor:utility]{Corollary 3}. \hyperref[thm:lipschitzness]{Corollary 1} shows that the Lipschitz constant of the component loss function is $\mathcal{O}(1)$. Now we show that the curvature constant for binary cross-entropy is also bounded in $\mathcal{O}(1)$ using \hyperref[cor:curvature]{Corollary 2}. 

    First, note that $\nabla^2 \mathcal{L}(\mathbf{w}) = \frac{1}{n} \mathbf{X}^\top \mathbf{S} \mathbf{X}$, where $\mathbf{X} \in \mathbb{R}^{n \times p}$ is the design matrix where each row is an element of the training data and $\mathbf{S}$ is a diagonal matrix where $s_{ii} = \sigma(\mathbf{x}_{i} \cdot \mathbf{w})(1 - \sigma(\mathbf{x}_{i} \cdot \mathbf{w}))$ \cite{logreg-hessian}. Let $\mathbf{z} = \mathbf{X}^\top\mathbf{SXv}$. Then, using the definition of matrix multiplication, it can be seen that $$z_{i} = \frac{1}{n} \sum_{l = 1}^{p} v_l \sum_{k = 1}^{n} x_{ki} \sigma (\mathbf{x}_k \cdot \mathbf{w})(1 - \sigma (\mathbf{x}_k \cdot \mathbf{w})) x_{kl},$$ 

    where $\mathbf{x}_k$ denotes the $k^\text{th}$ sample of the dataset, or equivalently the $k^\text{th}$ row of $\mathbf{X}$. Our goal is to find the maximum $L_\infty$ norm of $\mathbf{z}$ where $\lVert \mathbf{v} \rVert_1 = 1$. Specifically, we want to find $$\max_{\substack{\lVert \mathbf{v} \rVert_1 = 1 \\ i \in \{1, \ldots, d \}}} \left\lvert \frac{1}{n} \sum_{l = 1}^{p} v_l \sum_{k = 1}^{n} x_{ki} \sigma (\mathbf{x}_k \cdot \mathbf{w})(1 - \sigma (\mathbf{x}_k \cdot \mathbf{w})) x_{kl} \right\rvert.$$

    Note that for the objective function, irrespective of the value of $i$, 
    \begin{align*}
        &\left\lvert \frac{1}{n} \sum_{l = 1}^{p} v_l \sum_{k = 1}^{n} x_{ki} \sigma (\mathbf{x}_k \cdot \mathbf{w})(1 - \sigma (\mathbf{x}_k \cdot \mathbf{w})) x_{kl} \right\rvert \\ 
        &\leq \frac{1}{n} \sum_{l = 1}^{p} \left\lvert v_l \sum_{k = 1}^{n} x_{ki} \sigma (\mathbf{x}_k \cdot \mathbf{w})(1 - \sigma (\mathbf{x}_k \cdot \mathbf{w})) x_{kl} \right\rvert \\ 
        &\leq \frac{1}{n} \sum_{l = 1}^{p} \left\lvert v_l \right\rvert \left\lvert \sum_{k = 1}^{n} x_{ki} \sigma (\mathbf{x}_k \cdot \mathbf{w})(1 - \sigma (\mathbf{x}_k \cdot \mathbf{w})) x_{kl} \right\rvert \\ 
        &\leq \frac{1}{n} \sum_{l = 1}^{p} \left\lvert v_l \right\rvert \sum_{k = 1}^{n} \left\lvert x_{ki} \sigma (\mathbf{x}_k \cdot \mathbf{w})(1 - \sigma (\mathbf{x}_k \cdot \mathbf{w})) x_{kl} \right\rvert \\
        &\leq \frac{1}{n} \sum_{l = 1}^{p} \left\lvert v_l \right\rvert \sum_{k = 1}^{n} \left\lvert x_{ki} \right\rvert \left\lvert \sigma (\mathbf{x}_k \cdot \mathbf{w}) (1 - \sigma (\mathbf{x}_k \cdot \mathbf{w})) \right\rvert \left\lvert x_{kl} \right\rvert \\ 
        &\leq \frac{1}{n} \sum_{l = 1}^{p} \left\lvert v_l \right\rvert \sum_{k = 1}^{n} \frac{1}{4} 
        \leq \frac{1}{4} \sum_{l = 1}^{p} \left\lvert v_l \right\rvert 
        \leq \frac{1}{4}
    \end{align*}
    This means that each component of $\mathbf{z}$ must have magnitude less than $\frac{1}{4}$, meaning that $\beta \leq \frac{1}{4}$. Since we are constrained to the unit $L_1$ ball, $\lambda = 1$. Thus $\Gamma_\mathcal{L} \leq \beta \lambda^2$ is bounded in $\mathcal{O}(1)$. Plugging this into \hyperref[cor:utility]{Corollary 3} gives the desired result. 
\end{proof}

\subsection{Sparse Private LASSO for Logistic Regression}

As mentioned previously, the private Frank-Wolfe algorithm does not maintain sparse weights. For this reason, we developed an intuitive approach to sparsify its outputs. First, we modify it to initialize $\widehat{\mathbf{w}}_1$ to $\mathbf{0}$, which avoids nonzero components which are not updated during training. This is reflected in \hyperref[alg:private-frank-wolfe]{\texttt{Private LASSO}}. Second, we compute a value $c$, which is a clipped private count of the number of nonzero components of the true nonprivate solution. We then run the private LASSO algorithm and retain only the $c$ absolute largest nonzero components, setting the others back to zero. This helps us reduce the number of nonzero components which were updated away from zero due to noisy gradients. 

Our algorithm can be seen in \hyperref[alg:sparsifier]{\texttt{Sparsifier}}. For logistic regression, we use $\mathcal{L}(\mathbf{w}; D) = \frac{1}{n} \sum_{i = 1}^{n} -y_i \log \sigma(\mathbf{w} \cdot \mathbf{x}_i) - (1 - y_i) \log (1 - \sigma(\mathbf{w} \cdot \mathbf{x}_i))$, the binary cross-entropy loss. Note that in our implementation, the nonprivate LASSO algorithm used to find $\widehat{\mathbf{w}}_\mathit{NP}$ is identical to \hyperref[alg:private-frank-wolfe]{\texttt{Private LASSO}} except it is run for many iterations and does not add noise when computing $\alpha_\mathbf{s}$.  Additionally, we use clipping parameters $\alpha$ and $\beta$ to bound the sensitivity of the $c$ and ensure that it remains between these two values even after adding noise. We believe that this is conducive to most data analysis and engineering applications since the practitioner can set the minimum and maximum sparsity levels knowing the constraints of their system. 

Finally, we use a precision parameter $\rho$ to allow the user to find an appropriate balance between precision and recall for their application. Values of $\rho$ less than $1$ will make the algorithm keep fewer coefficients of $\widehat{\mathbf{w}}$ nonzero, which will improve the precision of $\widehat{\mathbf{w}}$'s nonzero coefficients but reduce its recall. Values of $\rho$ greater than $1$ will have the opposite effect.

\begin{algorithm}[t]
    \caption{\texttt{Sparsifier}} \label{alg:sparsifier}
    \begin{algorithmic}
        \REQUIRE Privacy Parameters: $\epsilon_1 > 0, \epsilon_2 > 0, 0 < \delta \leq 1$; Constraint Parameter: $\lambda > 0$; Clipping Parameters: $\alpha < \beta$; Precision Parameter: $\rho > 0$; Iteration Parameter: $T$; Dataset: $D = \{d_i\}_{i = 1}^{n}$ where $d_i = (\mathbf{x}_i \in \mathbb{R}^p, y_i \in \{0, 1\})$; Loss Function: $\mathcal{L}(\mathbf{w}; D)$ 
        \STATE
        \STATE $\widehat{\mathbf{w}}_\mathit{NP} \gets $Nonprivate LASSO$(\lambda, D, \mathcal{L}(\mathbf{w}; D))$
        \STATE $c \gets \text{Count Nonzero Components}\left(\widehat{\mathbf{w}}_\mathit{NP}\right)$
        \STATE $c \gets \text{Clip}\left(c, \alpha, \beta \right)$
        \STATE $c \gets c + \text{Double-Geometric}\left(1 - \exp\{-\frac{\epsilon_1}{\beta - \alpha} \}\right)$ 
        \STATE $c \gets \text{Clip}\left(c, \alpha, \beta \right)$
        \STATE $c \gets \text{Clip}(c \cdot \rho, 0, d)$
        \STATE $\widehat{\mathbf{w}}_P \gets \text{Private LASSO}\left(\epsilon_2, \delta, \lambda, T, D, \mathcal{L}(\mathbf{w}; D) \right)$
        \STATE $\widehat{\mathbf{w}} \gets \text{Keep Nonzero}\left(\widehat{\mathbf{w}}_P, c\right)$
        \STATE Output $\widehat{\mathbf{w}}$
    \end{algorithmic}
\end{algorithm}

\begin{theorem}
\hyperref[alg:sparsifier]{\texttt{Sparsifier}} adds a constant to the utility bound for \hyperref[alg:private-frank-wolfe]{\texttt{Private LASSO}}.
\end{theorem}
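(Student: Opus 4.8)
The plan is to show that the output $\widehat{\mathbf{w}}$ of \hyperref[alg:sparsifier]{\texttt{Sparsifier}} differs from the \hyperref[alg:private-frank-wolfe]{\texttt{Private LASSO}} iterate $\widehat{\mathbf{w}}_P$ only through the final \texttt{Keep Nonzero} step, to bound the induced change in empirical loss by a quantity that does not depend on $n$, $p$, $\delta$, or $\epsilon$, and then to add the utility guarantee already proved for \texttt{Private LASSO} in the preceding theorem. So the whole argument reduces to controlling one $L_1$ perturbation and pushing it through Lipschitzness.

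First I would observe that $\texttt{Keep Nonzero}(\widehat{\mathbf{w}}_P, c)$ simply zeroes all but the $c$ largest-magnitude coordinates of $\widehat{\mathbf{w}}_P$. Since \texttt{Private LASSO} returns a convex combination of points of $S = \lambda \cdot \{\text{vertices of the unit } L_1 \text{ ball}\}$, its output lies in $\mathcal{C} = \{\mathbf{w} \in \mathbb{R}^p : \lVert \mathbf{w} \rVert_1 \le \lambda\}$; the vector $\widehat{\mathbf{w}}_P - \widehat{\mathbf{w}}$ is supported on the discarded coordinates, so
$$\lVert \widehat{\mathbf{w}} - \widehat{\mathbf{w}}_P \rVert_1 \;\le\; \lVert \widehat{\mathbf{w}}_P \rVert_1 \;\le\; \lambda ,$$
and $\widehat{\mathbf{w}} \in \mathcal{C}$ as well, because zeroing coordinates cannot increase the $L_1$ norm. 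Crucially, this estimate is uniform over all of the algorithm's randomness — the double-geometric draw, the two clippings to $[\alpha,\beta]$, the scaling by $\rho$, and the clipping to $[0,d]$ — since it holds for \emph{every} value $c$ can take; when $c$ is at least the number of nonzero coordinates of $\widehat{\mathbf{w}}_P$ the perturbation is exactly zero.

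Next I would convert $L_1$-closeness into closeness in loss. The averaged loss $\mathcal{L}(\mathbf{w};D) = \tfrac1n \sum_{i=1}^n \mathcal{L}(\mathbf{w};d_i)$ inherits $L$-Lipschitzness with respect to $\lVert \cdot \rVert_1$ from its summands by the triangle inequality, and \hyperref[thm:lipschitzness]{Corollary 1} gives $L = 1$ for the binary cross-entropy loss, hence $L = \mathcal{O}(1)$. Therefore
$$\bigl\lvert \mathcal{L}(\widehat{\mathbf{w}};D) - \mathcal{L}(\widehat{\mathbf{w}}_P;D) \bigr\rvert \;\le\; L \, \lVert \widehat{\mathbf{w}} - \widehat{\mathbf{w}}_P \rVert_1 \;\le\; L\lambda ,$$
which equals $1$ in the unit-$L_1$-ball setting. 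Decomposing
$$\mathcal{L}(\widehat{\mathbf{w}};D) - \min_{\mathbf{w}\in\mathcal{C}} \mathcal{L}(\mathbf{w};D) = \bigl(\mathcal{L}(\widehat{\mathbf{w}};D) - \mathcal{L}(\widehat{\mathbf{w}}_P;D)\bigr) + \bigl(\mathcal{L}(\widehat{\mathbf{w}}_P;D) - \min_{\mathbf{w}\in\mathcal{C}} \mathcal{L}(\mathbf{w};D)\bigr),$$
taking expectations, bounding the first bracket by $L\lambda$ and the second by the \texttt{Private LASSO} bound from \hyperref[cor:utility]{Corollary 3} (with $\Gamma_\mathcal{L} = \mathcal{O}(1)$ and $L = \mathcal{O}(1)$, i.e.\ $\mathcal{O}(\log(np/\delta)/(n\epsilon)^{2/3})$ as in the preceding theorem), gives
$$\mathbb{E}\bigl[\mathcal{L}(\widehat{\mathbf{w}};D) - \min_{\mathbf{w}\in\mathcal{C}} \mathcal{L}(\mathbf{w};D)\bigr] \;\le\; L\lambda + \mathcal{O}\!\left(\frac{\log(np/\delta)}{(n\epsilon)^{2/3}}\right),$$
which is precisely the \texttt{Private LASSO} utility bound plus the additive constant $L\lambda$.

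The hard part is less any single inequality than making precise that the perturbation from \texttt{Keep Nonzero} is controlled by a constant that is independent of the problem dimensions and the privacy budget, and that this control is pathwise over the private count $c$; the clipping parameters, the geometric noise, and $\rho$ enter only the separate privacy accounting (the splitting of $\epsilon$ into $\epsilon_1,\epsilon_2$ and the double-geometric mechanism) and play no role in the loss analysis. I would also note explicitly that the comparison point $\min_{\mathbf{w}\in\mathcal{C}} \mathcal{L}(\mathbf{w};D)$ is unchanged because $\mathcal{C}$ is fixed, and one may optionally sharpen the crude estimate $\lVert \widehat{\mathbf{w}} - \widehat{\mathbf{w}}_P \rVert_1 \le \lambda$ (for example to $\lambda(k-c)/k$ when $\widehat{\mathbf{w}}_P$ has $k$ nonzeros, since the retained coordinates dominate the discarded ones), though the coarse bound already delivers the claimed result.
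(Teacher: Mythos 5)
Your proposal is correct and follows essentially the same route as the paper: decompose the excess risk into the sparsification term plus the \texttt{Private LASSO} term, bound the first by $L\lVert \widehat{\mathbf{w}} - \widehat{\mathbf{w}}_P \rVert_1 \le L\lambda$ using the $L_1$ constraint, and bound the second by \hyperref[cor:utility]{Corollary 3}. The only difference is that the paper additionally records the sharper intermediate bound $L\lambda \cdot \min\{(p-\alpha)/(\alpha+1),\,1\}$ via the order-statistic estimate $|\widehat{w}_P^{(c+1)}| \le \lambda/(c+1)$, a refinement you note as optional, and the worst-case constant $L\lambda$ agrees.
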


\begin{proof}
From the linearity of expectation, we know that $\mathbb{E}[\mathcal{L}(\widehat{\mathbf{w}}; D) - \min_{\mathbf{w}^* \in \mathcal{C}} \mathcal{L}(\mathbf{w}^*; D)]$ can be expanded to $\mathbb{E}[\mathcal{L}(\widehat{\mathbf{w}}; D) - \mathcal{L}(\widehat{\mathbf{w}}_P; D)] + \mathbb{E}[\mathcal{L}(\widehat{\mathbf{w}}_P; D)] - \min_{\mathbf{w}^* \in \mathcal{C}} \mathcal{L}(\mathbf{w}^*; D)$. 

Focusing on the first term, let $w^{(i)}$ denote the $i^{\text{th}}$ largest element of $\mathbf{w}$. Using the Lipschitz property, bounding the smallest $p-c$ components of $\widehat{\mathbf{w}}_P$ by the largest among them, and then accounting for the norm constraint, we have
\begin{align*}
\big|\mathcal{L}(\widehat{\mathbf{w}}) - \mathcal{L}(\widehat{\mathbf{w}}_P)\big| &\leq L \big\lVert \widehat{\mathbf{w}} - \widehat{\mathbf{w}}_P\big\rVert_1 = L \sum_{j=1}^p \big|\widehat{w}_j - \widehat{w}_{P_{j}}\big| \\
&\leq   L (p-c) \big|\widehat{w}_P^{(c+1)}\big| \leq L (p-c) \cdot \frac{\lambda}{c+1}
\end{align*}
but since this function decreases as $c$ decreases, we can plug in $\alpha$ for $c$. Additionally, we know that $\big\lVert \widehat{\mathbf{w}} - \widehat{\mathbf{w}}_P\big\rVert_1 \leq \lambda$. Then $\mathbb{E}[\mathcal{L}(\widehat{\mathbf{w}}; D) - \mathcal{L}(\widehat{\mathbf{w}}_P; D)] \leq L\lambda \cdot \min \Big\{ \frac{p - \alpha}{\alpha + 1}, \ 1 \Big\}$. This shows in the worst case, if all the selected coefficients are incorrect, applying \hyperref[alg:sparsifier]{\texttt{Sparsifier}} incurs a bias bounded by $L\lambda$ in expectation to \hyperref[alg:private-frank-wolfe]{\texttt{Private LASSO}}.\end{proof}

To derive a full utility bound, the second term in the expanded expectation is bounded by \hyperref[cor:utility]{Corollary 3}. The result is immediate. 

\begin{theorem}
\hyperref[alg:sparsifier]{\texttt{Sparsifier}} is differentially private with parameters $(\epsilon_1 + \epsilon_2, \delta)$. 
\end{theorem}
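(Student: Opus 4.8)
The plan is to decompose \hyperref[alg:sparsifier]{\texttt{Sparsifier}} into two privacy-consuming releases --- the noisy support size $c$ and the private weight vector $\widehat{\mathbf{w}}_P$ --- argue that each is private in isolation, combine them by basic (sequential) composition, and finally observe that the \texttt{Keep Nonzero} step is pure post-processing.

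First I would handle $c$. The key observation is that, although $\text{Count Nonzero}(\widehat{\mathbf{w}}_\mathit{NP})$ can change arbitrarily when one record of $D$ is swapped (the non-private LASSO solution itself carries no stability guarantee), the map $D \mapsto \text{Clip}(\text{Count Nonzero}(\widehat{\mathbf{w}}_\mathit{NP}(D)), \alpha, \beta)$ always takes values in $[\alpha,\beta]$, so on any two neighboring datasets its outputs differ by at most $\beta-\alpha$; its sensitivity is therefore at most $\beta-\alpha$. Adding \emph{double-geometric} (discrete Laplace) noise with success probability $1 - \exp\{-\epsilon_1/(\beta-\alpha)\}$ is precisely the geometric mechanism calibrated to this sensitivity: for any integer outcome and any integer shift of magnitude at most $\beta-\alpha$, the ratio of the noise mass functions is bounded by $\exp\{\epsilon_1\}$, giving $\epsilon_1$-differential privacy (pure DP, no $\delta$). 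The two subsequent clips, the multiplication by $\rho$, and the final clip to $[0,d]$ are data-independent deterministic maps of this noisy value, so by the post-processing property the released $c$ is $\epsilon_1$-DP. I would also stress that $\widehat{\mathbf{w}}_\mathit{NP}$ is itself never output --- only the clipped-then-noised count is --- so nothing is leaked by the non-private training step.

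Second, $\widehat{\mathbf{w}}_P$ is the output of \hyperref[alg:private-frank-wolfe]{\texttt{Private LASSO}} run with budget $(\epsilon_2,\delta)$, which is $(\epsilon_2,\delta)$-DP by the privacy analysis of Talwar et al.; as already noted in the excerpt, initializing $\widehat{\mathbf{w}}_1$ at $\mathbf{0}$ is a data-independent modification and does not affect that argument. Basic composition then shows that releasing the pair $(c,\widehat{\mathbf{w}}_P)$, obtained from an $(\epsilon_1,0)$-DP mechanism and an $(\epsilon_2,\delta)$-DP mechanism on the same $D$, is $(\epsilon_1+\epsilon_2,\delta)$-DP. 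Finally, $\widehat{\mathbf{w}} = \text{Keep Nonzero}(\widehat{\mathbf{w}}_P, c)$ is a deterministic function of this pair, so post-processing preserves $(\epsilon_1+\epsilon_2,\delta)$-DP, which is the claim.

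I expect the main obstacle to be the sensitivity argument for $c$: one must make explicit that clipping to $[\alpha,\beta]$ \emph{before} adding noise is exactly what tames the otherwise uncontrolled sensitivity of the nonzero count, and that the double-geometric parameter in the pseudocode is the one that yields $\epsilon_1$-DP at sensitivity $\beta-\alpha$ (matching the standard discrete Laplace / geometric mechanism). The remaining ingredients --- the privacy of \texttt{Private LASSO}, sequential composition, and post-processing --- are standard and need only be invoked.
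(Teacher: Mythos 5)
Your proposal is correct and follows essentially the same route as the paper's own proof: bound the sensitivity of the clipped nonzero count by $\beta-\alpha$, note that the double-geometric noise yields $(\epsilon_1,0)$-DP for $c$ with the subsequent clips and $\rho$-scaling handled by post-processing, invoke Talwar et al.\ for the $(\epsilon_2,\delta)$-DP of $\widehat{\mathbf{w}}_P$, and combine via basic composition and post-processing of the pair. Your explicit remark that clipping \emph{before} noising is what controls the otherwise unbounded sensitivity of the count is a slightly more careful articulation of the step the paper states only briefly, but it is the same argument.
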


\begin{proof}
The output $\widehat{\mathbf{w}}$ of \hyperref[alg:sparsifier]{\texttt{Sparsifier}} is computed from two parameters: $c$ and $\widehat{\mathbf{w}}_P$. $c$ is a count variable with sensitivity bounded by $\beta - \alpha$ which has been privatized by adding noise from a double-geometric distribution with parameter $1 - \exp\{-\frac{\epsilon_1}{\beta - \alpha} \}$. This distribution produces lower variance than Laplacian noise but still ensures that $c$ is $(\epsilon_1, 0)$-differentially private \cite{double-geometric-2,double-geometric-1}. Although its value may change after the noise is added, these changes do not rely on information contained within the dataset. The post-processing property of differential privacy proves that transformations of differentially private outputs by functions which do not use the private dataset remain differentially private \cite{post-processing-dp}. 

Talwar et al. prove that $\widehat{\mathbf{w}}_P$ is $(\epsilon_2, \delta)$-differentially private \cite{private-frank-wolfe}.

According to the basic composition theorem of differential privacy, computing $\widehat{\mathbf{w}}$ from parameters with privacy guarantees $(\epsilon_1, 0)$ and $(\epsilon_2, \delta)$ produces a privacy guarantee of $(\epsilon_1 + \epsilon_2, \delta)$ for the output \cite{dp-composition}. 
\end{proof}

\hyperref[alg:private-frank-wolfe]{\texttt{Private LASSO}} and \hyperref[alg:sparsifier]{\texttt{Sparsifier}} can be efficiently implemented for high dimensional problems using computational linear algebra systems with sparse matrices to avoid using an inner loop for \hyperref[alg:private-frank-wolfe]{\texttt{Private LASSO}}. Psuedocode for implementing this using NumPy and SciPy is provided in \hyperref[sec:binary-implementation]{Appendix B} \cite{numpy,scipy}. 

\hyperref[alg:private-frank-wolfe]{\texttt{Private LASSO}} can be modified to achieve private LASSO-regularized multinomial logistic regression. This algorithm is shown in \hyperref[sec:multinomial]{Appendix A}. 

\section{Results}
\label{sec:results}

We begin by comparing the sparsity provided by \hyperref[alg:sparsifier]{\texttt{Sparsifier}} to solutions produced by \hyperref[alg:private-frank-wolfe]{\texttt{Private LASSO}}. After demonstrating that \hyperref[alg:sparsifier]{\texttt{Sparsifier}} performs better than \hyperref[alg:private-frank-wolfe]{\texttt{Private LASSO}}, we compare \hyperref[alg:sparsifier]{\texttt{Sparsifier}} to the methods for sparse private LR discussed in \hyperref[sec:related-work]{Section 2}. 

\subsection{Comparing \hyperref[alg:sparsifier]{\texttt{Sparsifier}} with \hyperref[alg:private-frank-wolfe]{\texttt{Private LASSO}}}

 To compare the performance of \hyperref[alg:sparsifier]{\texttt{Sparsifier}} with \hyperref[alg:private-frank-wolfe]{\texttt{Private LASSO}}, we trained both on a variety of binary prediction tasks. Note that in the following results, experiments on \hyperref[alg:private-frank-wolfe]{\texttt{Private LASSO}} set $\epsilon = 1, \delta = 1 / n_{\text{train}}$ and experiments on \hyperref[alg:sparsifier]{\texttt{Sparsifier}} set $\epsilon_1 = 0.05, \epsilon_2 = 0.95, \delta = 1 / n_{\text{train}}, \rho = 1$, producing the same privacy guarantee for both. For \hyperref[alg:sparsifier]{\texttt{Sparsifier}}, the nonprivate Frank-Wolfe algorithm was run for 50,000 iterations to achieve a nearly optimal result. Additionally, we heuristically set $\alpha = \sqrt{p}$ and $\beta = 2\sqrt{p}$ because many applications of sparse classifiers in data science seek $\mathcal{O}\left(\sqrt{p}\right)$ total nonzero features. To standardize results, 50 trials of each algorithm were run with $\lambda=10$ on all datasets. Note that the Synthetic and KDDCUP'99 datasets are described in \hyperref[sec:4.2]{Section 4.2} and all other datasets are taken directly from LIBSVM \cite{LIBSVM}. Datasets are arranged by number of features, with KDDCUP'99 having 41 and Gisette having 5000.

\subsubsection{Establishing the Density of \hyperref[alg:private-frank-wolfe]{\texttt{Private LASSO}}}

Here we demonstrate that \hyperref[alg:private-frank-wolfe]{\texttt{Private LASSO}} produces dense solutions to classification problems even when $\widehat{\mathbf{w}}_1$ is initialized to $\mathbf{0}$. To do this, we found the ratio of the number of nonzero coefficients found by \hyperref[alg:private-frank-wolfe]{\texttt{Private LASSO}} and \hyperref[alg:sparsifier]{\texttt{Sparsifier}} to the nonprivate Frank-Wolfe algorithm when learning to predict on a variety of datasets. 

\hyperref[tab:20-newsgroups-sparsity]{Table 1} lists the results of this experiment. It is clear that \hyperref[alg:private-frank-wolfe]{\texttt{Private LASSO}} produces far more nonzero coefficients than the nonprivate algorithm and that \hyperref[alg:sparsifier]{\texttt{Sparsifier}} mitigates this issue. This confirms the benefit of using \hyperref[alg:sparsifier]{\texttt{Sparsifier}} to consistently produce a truly sparse weight. 

\begin{table}[t]
\caption{Average number of nonzeros (NZ) and ratio of the number of nonzero coefficients found by the \hyperref[alg:private-frank-wolfe]{\texttt{Private LASSO}} and \hyperref[alg:sparsifier]{\texttt{Sparsifier}} to the nonprivate Frank-Wolfe algorithm on multiple datasets. \hyperref[alg:private-frank-wolfe]{\texttt{Private LASSO}} and \hyperref[alg:sparsifier]{\texttt{Sparsifier}} were run for 1000 iterations, while the nonprivate Frank-Wolfe algorithm was run for 50,000 iterations. }
\vskip -0.10 in
\label{tab:20-newsgroups-sparsity}
\begin{center}
\begin{small}
\begin{sc}
\setlength\tabcolsep{6.5pt}
\begin{tabular}{lccccc}
\toprule
& Nonprivate & \multicolumn{2}{c}{\hyperref[alg:private-frank-wolfe]{\texttt{Private LASSO}}} & \multicolumn{2}{c}{\hyperref[alg:sparsifier]{\texttt{Sparsifier}}} \\ 
\cmidrule(lr){2-2} \cmidrule(lr){3-4} \cmidrule(lr){5-6} 
Dataset & NZ & NZ & Ratio & NZ & Ratio \\
\midrule
KDDCUP'99 & 4 & 106.90 & 26.75 & 16.10 & 4.29 \\ 
Covtype & 17 & 30.38 & 3.18 & 11.78 & 0.68 \\
Splice & 52 & 60.00 & 1.15 & 12.70 & 0.24 \\
Phishing & 2 & 68.00 & 34.00 & 12.02 & 6.16 \\
Synthetic & 7 & 100.00 & 14.29 & 15.08 & 2.29 \\
Mushroom & 16 & 111.98 & 7.00 & 16.02 & 1.04 \\
A9A & 19 & 122.90 & 6.47 & 18.06 & 0.89 \\
W8A & 20 & 285.64 & 14.44 & 25.60 & 1.35 \\ 
Colon & 58 & 785.72 & 13.56 & 68.16 & 1.12 \\
Gisette & 77 & 901.30 & 11.67 & 102.50 & 1.43 \\
\bottomrule
\end{tabular}
\end{sc}
\end{small}
\end{center}
\vskip -0.20in
\end{table}

\subsubsection{Comparing \hyperref[alg:sparsifier]{\texttt{Sparsifier}} with \hyperref[alg:private-frank-wolfe]{\texttt{Private LASSO}} Run for Fewer Iterations}

The previous result indicates that when run for the same number of iterations, \hyperref[alg:sparsifier]{\texttt{Sparsifier}} produces a more sparse result than \hyperref[alg:private-frank-wolfe]{\texttt{Private LASSO}}. However, \hyperref[alg:private-frank-wolfe]{\texttt{Private LASSO}} can produce sparse solutions if run for fewer iterations, as this will cause the algorithm to update fewer coefficients. Additionally, running it for fewer iterations will add less noise and thus produce more accurate updates. 

We wanted to determine if the information that \hyperref[alg:sparsifier]{\texttt{Sparsifier}} gains from running \hyperref[alg:private-frank-wolfe]{\texttt{Private LASSO}} for more iterations and sparsifying the result is worth the more inaccurate updates compared to only running \hyperref[alg:private-frank-wolfe]{\texttt{Private LASSO}} for fewer iterations to achieve the same sparsity. To answer this question, we found the number of iterations to run \hyperref[alg:private-frank-wolfe]{\texttt{Private LASSO}} to achieve approximately the same sparsity as running \hyperref[alg:sparsifier]{\texttt{Sparsifier}} for 1000 iterations on each dataset. We compared the accuracy and AUC of the solutions. 

Results are displayed in \hyperref[tab:sparse-accuracy-comparison]{Table 2}. \hyperref[alg:sparsifier]{\texttt{Sparsifier}} produces significantly better accuracy and AUC for most datasets, while \hyperref[alg:private-frank-wolfe]{\texttt{Private LASSO}} is never signficantly better than \hyperref[alg:sparsifier]{\texttt{Sparsifier}}. This implies that it is worth running \hyperref[alg:private-frank-wolfe]{\texttt{Private LASSO}} for more iterations and sparsifying the result instead of just running it for fewer iterations, which validates the usefulness of \hyperref[alg:sparsifier]{\texttt{Sparsifier}}. 

\begin{table}[t]
\caption{Average accuracies and AUCs of \hyperref[alg:sparsifier]{\texttt{Sparsifier}} and \hyperref[alg:private-frank-wolfe]{\texttt{Private LASSO}} when \hyperref[alg:private-frank-wolfe]{\texttt{Private LASSO}} was run for enough iterations to approximately match the number of nonzero coefficients (NZ) produced by \hyperref[alg:sparsifier]{\texttt{Sparsifier}} on each dataset. Values in bold are significantly better as measured by the Wilcoxon Signed-Rank test \cite{wilcoxon-1,wilcoxon-2}.}
\vskip -0.10 in
\label{tab:sparse-accuracy-comparison}
\begin{center}
\begin{small}
\begin{sc}
\setlength\tabcolsep{6.5pt}
\adjustbox{max width=\columnwidth}{
\begin{tabular}{@{}lcccccc@{}}
\toprule
\multicolumn{1}{l}{} & \multicolumn{3}{c}{\hyperref[alg:sparsifier]{\texttt{Sparsifier}}}           & \multicolumn{3}{c}{\hyperref[alg:private-frank-wolfe]{\texttt{Private LASSO}}}       \\ \cmidrule(lr){2-4} \cmidrule(lr){5-7}
Dataset &
  \multicolumn{1}{c}{NZ} &
  \multicolumn{1}{c}{Acc} &
  \multicolumn{1}{c}{AUC} &
  \multicolumn{1}{c}{NZ} &
  \multicolumn{1}{c}{Acc} &
  \multicolumn{1}{c}{AUC} \\ \midrule
KDDCUP'99                       & 17                   & 53.71          & 91.48          & 14                   & 54.66                   & 86.57                   \\
Covtype                     & 12                   & \textbf{57.35}                   & \textbf{64.36}          & 11                   & 54.90          & 60.15                   \\
Splice                      & 12                   & \textbf{71.34}          & \textbf{86.84}          & 14                   & 49.95                   & 50.13                   \\
Phishing                    & 12                   & 63.79          & \textbf{77.23}          & 10                   & 64.26                   & 68.95                   \\
Synthetic                   & 16                   & \textbf{85.17}          & \textbf{93.28}                   & 17                   & 56.41                   & 59.03          \\ 
Mushroom                    & 17                   & \textbf{77.89}          & \textbf{88.65}          & 17                   & 72.66                   & 83.23                    \\
A9A                         & 17                   & \textbf{76.60}          & 67.64          & 19                   & 71.57                   & 66.29                   \\
W8A                         & 27                   & \textbf{97.15}          & 75.64          & 26                   & 97.02                   & 76.36                   \\
Colon                       & 65                  & 83.67          & 85.00          & 69                  & 83.33                   & 86.61                   \\
Gisette                     & 110                  & \textbf{67.94}          & \textbf{79.76}          & 102                  & 64.48                   & 70.05                   \\ \bottomrule
\end{tabular}
}
\end{sc}
\end{small}
\end{center}
\vskip -0.20in
\end{table}

\subsection{Comparing \hyperref[alg:sparsifier]{\texttt{Sparsifier}} with Related Works}
\label{sec:4.2}
In this section, we demonstrate that \hyperref[alg:sparsifier]{\texttt{Sparsifier}} produces more sparse and accurate results than other approachs to private LASSO LR. 

In all the following experiments, we used hyperparameters $\epsilon_1 = 0.05, \rho = 1, \delta = 1 / n_{\text{train}}$. We again set $\alpha = \sqrt{p}$ and $\beta = 2\sqrt{p}$ and ran the nonprivate Frank-Wolfe algorithm required in \hyperref[alg:sparsifier]{\texttt{Sparsifier}} for 50,000 iterations. We chose $\lambda$ from 5 logarithmically-spaced values between 1 and 50 to maximize accuracy on a held-out validation set consisting of 20\% of the data. All reported $\epsilon$ values indicate the privacy of the output weights, $\epsilon_1 + \epsilon_2$. Since we were unable to replicate the sparse outputs of the private ADMM algorithm, we directly use the results provided in that paper. We explore the synthetic and KDDCUP'99 datasets since these are what were tested in the private ADMM paper \cite{admm}. 

\subsubsection{Synthetic Data}

\begin{figure}[t]
\begin{center}
\centerline{\begin{tikzpicture}

\definecolor{black25}{RGB}{25,25,25}
\definecolor{darkslategray38}{RGB}{38,38,38}
\definecolor{indianred1967882}{RGB}{196,78,82}
\definecolor{lavender234234242}{RGB}{234,234,242}
\definecolor{lightgray204}{RGB}{204,204,204}
\definecolor{steelblue76114176}{RGB}{76,114,176}

\begin{axis}[
axis background/.style={fill=lavender234234242},
axis line style={white},
legend cell align={left},
legend style={
  fill opacity=0.5,
  draw opacity=1,
  text opacity=1,
  at={(0.05,0.65)},
  anchor=west,
  draw=lightgray204,
  fill=lavender234234242, 
  nodes={scale=0.75, transform shape}
},
tick align=inside,
x grid style={white},
xlabel=\textcolor{darkslategray38}{\(\displaystyle \epsilon\)},
xmajorgrids,
xmajorticks=true,
xmin=0.85, xmax=4.15,
xtick style={draw=none},
y grid style={white},
ylabel=\textcolor{darkslategray38}{Average Number of Correct Zeros},
ymajorgrids,
ymajorticks=true,
ymin=0.05, ymax=99.95,
ytick style={draw=none}
]
\addplot [thick, indianred1967882]
table {%
1 85.0199966430664
1.5 84.0599975585938
2 85.3600006103516
2.5 85.0199966430664
3 85.5999984741211
3.5 84.3399963378906
4 85.7200012207031
};
\addlegendentry{\texttt{Sparsifier}}
\addplot [thick, steelblue76114176, dotted]
table {%
1 22.0799999237061
1.5 29.3999996185303
2 21.8199996948242
2.5 3.61999988555908
3 3.55999994277954
3.5 15.1599998474121
4 7.80000019073486
};
\addlegendentry{Two-Stage}
\addplot [thick, steelblue76114176, dashed]
table {%
1 15
1.5 24
2 32
2.5 39
3 51
4 60
};
\addlegendentry{Private ADMM}
\addplot [thick, steelblue76114176, dash pattern=on 1pt off 3pt on 3pt off 3pt]
table {%
1 0.980000019073486
1.5 4.65999984741211
2 19.0599994659424
2.5 1.79999995231628
3 24.9400005340576
3.5 13
4 66.3399963378906
};
\addlegendentry{Private IGHT}
\addplot [thick, black25]
table {%
1 92
4 92
};
\addlegendentry{Maximum Correct Zeros}
\end{axis}

\end{tikzpicture}}
\vskip -0.10 in
\caption{Comparing the number of correctly identified zeros of \hyperref[alg:sparsifier]{\texttt{Sparsifier}} with two-stage, private ADMM, and private IGHT. \hyperref[alg:sparsifier]{\texttt{Sparsifier}} and private IGHT were run for 1000 iterations. A black line at 92 coefficients has been drawn indicating the number of true zero coefficients. Higher numbers of correct zero coefficients indicate that an algorithm is able to maintain sparsity.}
\label{fig:correct-zeros}
\end{center}
\vskip -0.25in
\end{figure}
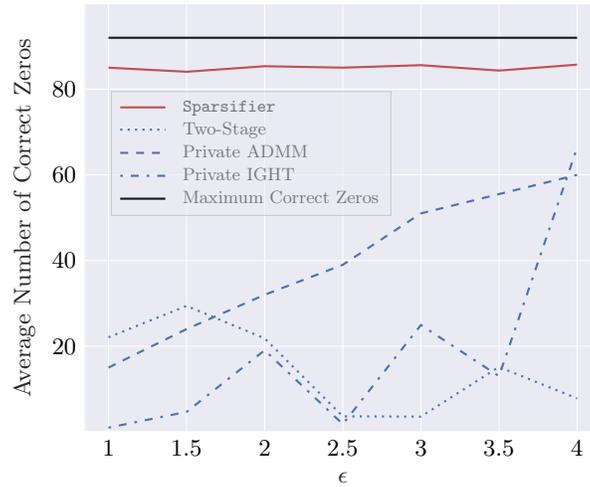

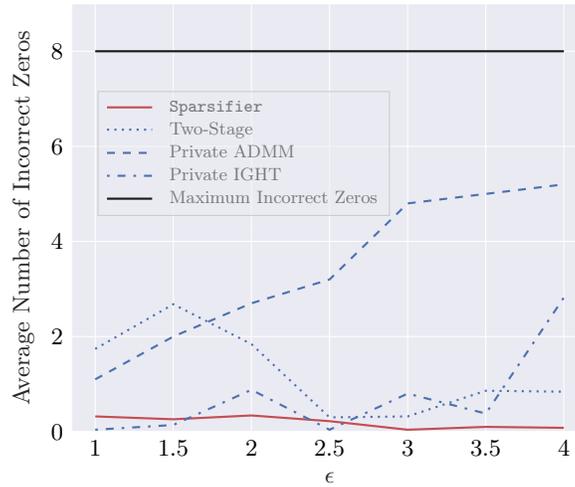
\begin{figure}[t]
\begin{center}
\centerline{\begin{tikzpicture}

\definecolor{black25}{RGB}{25,25,25}
\definecolor{darkslategray38}{RGB}{38,38,38}
\definecolor{indianred1967882}{RGB}{196,78,82}
\definecolor{lavender234234242}{RGB}{234,234,242}
\definecolor{lightgray204}{RGB}{204,204,204}
\definecolor{steelblue76114176}{RGB}{76,114,176}

\begin{axis}[
axis background/.style={fill=lavender234234242},
axis line style={white},
legend cell align={left},
legend style={
  fill opacity=0.5,
  draw opacity=1,
  text opacity=1,
  at={(0.05,0.65)},
  anchor=west,
  draw=lightgray204,
  fill=lavender234234242, 
  nodes={scale=0.75, transform shape}
},
tick align=inside,
x grid style={white},
xlabel=\textcolor{darkslategray38}{\(\displaystyle \epsilon\)},
xmajorgrids,
xmajorticks=true,
xmin=0.85, xmax=4.15,
xtick style={draw=none},
y grid style={white},
ylabel=\textcolor{darkslategray38}{Average Number of Incorrect Zeros},
ymajorgrids,
ymajorticks=true,
ymin=0, ymax=9,
ytick style={draw=none}
]
\addplot [thick, indianred1967882]
table {%
1 0.319999933242798
1.5 0.259999990463257
2 0.340000033378601
2.5 0.220000028610229
3 0.0399999618530273
3.5 0.100000023841858
4 0.0800000429153442
};
\addlegendentry{\texttt{Sparsifier}}
\addplot [thick, steelblue76114176, dotted]
table {%
1 1.74000000953674
1.5 2.6800000667572
2 1.8400000333786
2.5 0.299999952316284
3 0.319999933242798
3.5 0.860000014305115
4 0.839999914169312
};
\addlegendentry{Two-Stage}
\addplot [thick, steelblue76114176, dashed]
table {%
1 1.10000002384186
1.5 2
2 2.70000004768372
2.5 3.20000004768372
3 4.80000019073486
4 5.19999980926514
};
\addlegendentry{Private ADMM}
\addplot [thick, steelblue76114176, dash pattern=on 1pt off 3pt on 3pt off 3pt]
table {%
1 0.0399999618530273
1.5 0.139999985694885
2 0.879999995231628
2.5 0.0399999618530273
3 0.799999952316284
3.5 0.379999995231628
4 2.8199999332428
};
\addlegendentry{Private IGHT}
\addplot [thick, black25]
table {%
1 8
4 8
};
\addlegendentry{Maximum Incorrect Zeros}
\end{axis}

\end{tikzpicture}}
\vskip -0.10 in
\caption{Comparing the number of incorrectly identified zeros of \hyperref[alg:sparsifier]{\texttt{Sparsifier}} with two-stage, private ADMM, and private IGHT. \hyperref[alg:sparsifier]{\texttt{Sparsifier}} and private IGHT were run for 1000 iterations. A black line at 8 coefficients has been drawn indicating the number of true nonzero coefficients. Lower numbers of incorrect zero coefficients indicate that an algorithm is able to correctly identify features which are useful for the prediction task.}
\label{fig:incorrect-zeros}
\end{center}
\vskip -0.25in
\end{figure}

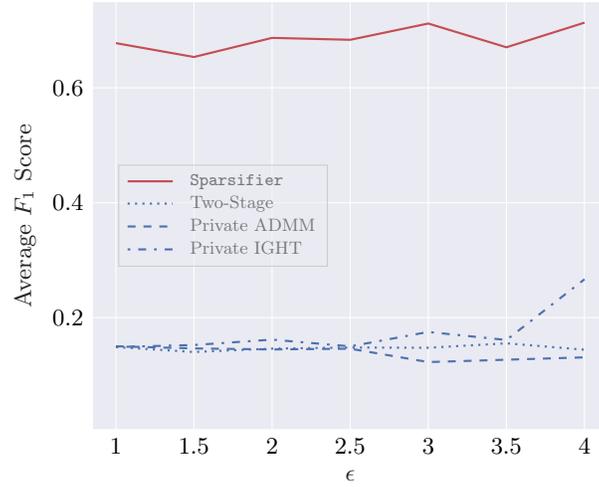
\begin{figure}[t]
\begin{center}
\centerline{\begin{tikzpicture}

\definecolor{darkslategray38}{RGB}{38,38,38}
\definecolor{indianred1967882}{RGB}{196,78,82}
\definecolor{lavender234234242}{RGB}{234,234,242}
\definecolor{lightgray204}{RGB}{204,204,204}
\definecolor{steelblue76114176}{RGB}{76,114,176}

\begin{axis}[
axis background/.style={fill=lavender234234242},
axis line style={white},
legend cell align={left},
legend style={
  fill opacity=0.5,
  draw opacity=1,
  text opacity=1,
  at={(0.05,0.5)},
  anchor=west,
  draw=lightgray204,
  fill=lavender234234242, 
  nodes={scale=0.75, transform shape}
},
tick align=inside,
x grid style={white},
xlabel=\textcolor{darkslategray38}{\(\displaystyle \epsilon\)},
xmajorgrids,
xmajorticks=true,
xmin=0.85, xmax=4.15,
xtick style={draw=none},
y grid style={white},
ylabel=\textcolor{darkslategray38}{Average \(\displaystyle F_1\) Score},
ymajorgrids,
ymajorticks=true,
ymin=0.005, ymax=0.75,
ytick style={draw=none}
]
\addplot [thick, indianred1967882]
table {%
1 0.67784643
1.5 0.65371622
2 0.68699552
2.5 0.68365554
3 0.71198569
3.5 0.67062818
4 0.71351351
};
\addlegendentry{\texttt{Sparsifier}}
\addplot [thick, steelblue76114176, dotted]
table {%
1 0.148728847503662
1.5 0.140147566795349
2 0.146075367927551
2.5 0.147963047027588
3 0.14752209186554
3.5 0.155251145362854
4 0.144122362136841
};
\addlegendentry{Two-Stage}
\addplot [thick, steelblue76114176, dashed]
table {%
1 0.150163173675537
1.5 0.146341443061829
2 0.144611120223999
2.5 0.145896673202515
3 0.122605323791504
4 0.130841135978699
};
\addlegendentry{Private ADMM}
\addplot [thick, steelblue76114176, dash pattern=on 1pt off 3pt on 3pt off 3pt]
table {%
1 0.148812890052795
1.5 0.152325630187988
2 0.161707878112793
2.5 0.149962306022644
3 0.175054669380188
3.5 0.161065340042114
4 0.266735315322876
};
\addlegendentry{Private IGHT}
\end{axis}

\end{tikzpicture}}
\caption{$F_1$ scores of the learned coefficients with respect to the true weight. In the calculation, true positives were correctly identified nonzero coefficients, false positives were learned nonzero coefficients in the latter 92 components of the weight, and false negatives were incorrectly identified zero coefficients. Higher scores indicate that the algorithm does at distinguishing between true nonzeros and zeros.}
\label{fig:f1}
\end{center}
\vskip -0.25in
\end{figure}

\begin{figure}[!h]
\begin{center}
\centerline{\begin{tikzpicture}

\definecolor{darkslategray38}{RGB}{38,38,38}
\definecolor{indianred1967882}{RGB}{196,78,82}
\definecolor{lavender234234242}{RGB}{234,234,242}
\definecolor{lightgray204}{RGB}{204,204,204}
\definecolor{steelblue76114176}{RGB}{76,114,176}

\begin{axis}[
axis background/.style={fill=lavender234234242},
axis line style={white},
legend cell align={left},
legend style={
  fill opacity=0.5,
  draw opacity=1,
  text opacity=1,
  at={(0.465,0.865)},
  anchor=west,
  draw=lightgray204,
  fill=lavender234234242, 
  nodes={scale=0.75, transform shape}
},
tick align=outside,
x grid style={white},
xlabel=\textcolor{darkslategray38}{\(\displaystyle \epsilon\)},
xmajorgrids,
xmajorticks=true,
xmin=0.85, xmax=4.15,
xtick style={draw=none},
y grid style={white},
ylabel=\textcolor{darkslategray38}{Average Classification Error},
ymajorgrids,
ymajorticks=true,
ymin=0.0075106999999999, ymax=0.45,
ytick style={draw=none}
]
\addplot [thick, indianred1967882]
table {%
1 0.0404200553894043
1.5 0.0316139459609985
2 0.0220279693603516
2.5 0.0215959548950195
3 0.0162899494171143
3.5 0.0200940370559692
4 0.0148559808731079
};
\addlegendentry{\texttt{Sparsifier}}
\addplot [thick, steelblue76114176, dotted]
table {%
1 0.393020033836365
1.5 0.333770036697388
2 0.288800001144409
2.5 0.261165976524353
3 0.239058017730713
3.5 0.214573979377747
4 0.206977963447571
};
\addlegendentry{Two-Stage}
\addplot [thick, steelblue76114176, dashed]
table {%
1 0.125
1.5 0.110000014305115
2 0.100000023841858
2.5 0.0950000286102295
3 0.0700000524520874
4 0.059999942779541
};
\addlegendentry{Private ADMM}
\addplot [thick, steelblue76114176, dash pattern=on 1pt off 3pt on 3pt off 3pt]
table {%
1 0.396836042404175
1.5 0.336166024208069
2 0.29246997833252
2.5 0.265624046325684
3 0.242831945419312
3.5 0.231292009353638
4 0.21547794342041
};
\addlegendentry{Private IGHT}
\end{axis}

\end{tikzpicture}}
\vskip -0.10 in
\caption{Comparing the accuracy of \hyperref[alg:sparsifier]{\texttt{Sparsifier}} with two-stage, private ADMM, and private IGHT. \hyperref[alg:sparsifier]{\texttt{Sparsifier}} and private IGHT were run for 1000 iterations.}
\label{fig:accuracy-comparison}
\end{center}
\vskip -0.25in
\end{figure}

 Here, we test \hyperref[alg:sparsifier]{\texttt{Sparsifier}} on a synthetic dataset to demonstrate that it outperforms prior solutions for sparse private logistic regression. We compare our solution to the the two-stage procedure developed by Kifer et al., private ADMM algorithm developed by Wang \& Zhang, and the private IGHT method developed by Wang \& Gu \cite{two-stage,admm,dp-ight}. 

For this comparison, we generated synthetic data according to the details provided by Wang \& Zhang \cite{admm}. Specifically, we generated $10,000$ datapoints where each $\mathbf{x}_i \in \mathbb{R}^{100}$ was drawn from $\mathcal{N}(\mathbf{0}, \Sigma)$ with $\Sigma_{ij} = 0.5^{|i - j|}$ and scaled each feature so its maximum absolute value in the dataset was $1$. The true weight vector $\mathbf{w}^*$ had its first 8 components set to $\begin{bmatrix} 10 & 9 & 8 & 7 & 6 & 5 & 4 & 0.5 \end{bmatrix}$ and its last 92 components set to $0$. Then, if $\sigma(\mathbf{w}^* \cdot \mathbf{x}_i) > 0.5$, $y_i = 1$, else $y_i = 0$. After reserving 20\% of this data for validation, we ran \hyperref[alg:sparsifier]{\texttt{Sparsifier}} 50 times for each value of $\epsilon$, choosing $\lambda$ which maximized average accuracy. We also ran the two-stage and private IGHT methods 50 times for each $\epsilon$. 

\hyperref[fig:correct-zeros]{Figure 1} and \hyperref[fig:incorrect-zeros]{Figure 2} display the results of this experiment. It is clear that \hyperref[alg:sparsifier]{\texttt{Sparsifier}} outperforms the other techniques in finding the correct zero coefficients while setting almost none of the true nonzero coefficients to zero. This means that on this dataset it can produce solutions while avoiding setting the true nonzero coefficients to zero. \hyperref[fig:f1]{Figure 3} summarizes these results, computing the $F_1$ score of the learned coefficients with respect to the true weight. 

We also compare the classification error of each method on a held-out test set in \hyperref[fig:accuracy-comparison]{Figure 4}. For all values of $\epsilon$, \hyperref[alg:sparsifier]{\texttt{Sparsifier}} achieves lower classification error than the other techniques. 

\subsubsection{KDDCUP'99}
\label{sec:4.3}

\begin{figure}[!h]
\begin{center}
\centerline{\begin{tikzpicture}

\definecolor{darkslategray38}{RGB}{38,38,38}
\definecolor{indianred1967882}{RGB}{196,78,82}
\definecolor{lavender234234242}{RGB}{234,234,242}
\definecolor{lightgray204}{RGB}{204,204,204}
\definecolor{steelblue76114176}{RGB}{76,114,176}

\begin{axis}[
axis background/.style={fill=lavender234234242},
axis line style={white},
legend cell align={left},
legend style={
  fill opacity=0.5,
  draw opacity=1,
  text opacity=1,
  at={(0.05,0.65)},
  anchor=west,
  draw=lightgray204,
  fill=lavender234234242, 
  nodes={scale=0.75, transform shape}
},
tick align=outside,
x grid style={white},
xlabel=\textcolor{darkslategray38}{\(\displaystyle \epsilon\)},
xmajorgrids,
xmajorticks=true,
xmin=0.9, xmax=3.1,
xtick style={draw=none},
y grid style={white},
ylabel=\textcolor{darkslategray38}{Average Classification Error},
ymajorgrids,
ymajorticks=true,
ymin=-0.0304629174209574, ymax=0.85,
ytick style={draw=none}
]
\addplot [thick, indianred1967882]
table {%
1 0.0110251903533936
1.5 0.00858020782470703
2 0.00854682922363281
2.5 0.00766849517822266
3 0.00761198997497559
};
\addlegendentry{\texttt{Sparsifier}}
\addplot [thick, steelblue76114176, dotted]
table {%
1 0.0221313238143921
1.5 0.0116636753082275
2 0.0112500190734863
2.5 0.010560154914856
3 0.00928926467895508
};
\addlegendentry{Two-Stage}
\addplot [thick, steelblue76114176, dashed]
table {%
1 0.120000004768372
1.5 0.0750000476837158
2 0.059999942779541
3 0.0499999523162842
};
\addlegendentry{Private ADMM}
\addplot [thick, steelblue76114176, dash pattern=on 1pt off 3pt on 3pt off 3pt]
table {%
1 0.730558395385742
1.5 0.722129821777344
2 0.745075106620789
2.5 0.759255409240723
3 0.740331411361694
};
\addlegendentry{Private IGHT}
\end{axis}

\end{tikzpicture}}
\caption{Comparing the accuracy of \hyperref[alg:sparsifier]{\texttt{Sparsifier}} with two-stage, private ADMM, and private IGHT. \hyperref[alg:sparsifier]{\texttt{Sparsifier}} and private IGHT were run for 1000 iterations.}
\label{fig:kdd-accuracy-comparison}
\end{center}
\vskip -0.35in
\end{figure}

We also compared these methods on the KDDCUP'99 dataset, which consists of 5 million samples with 41 features and a binary positive or negative label. Positive labels correspond to benign network connections while negative labels correspond to malicious denial-of-service attacks. For our experiment, we followed the procedure of Wang \& Zhang \cite{admm}. Specifically, we randomly chose 60000 points for training, 38420 of which had positive labels. We then normalized each feature so its maximum absolute value was 1, and then normalized each training sample so its norm was one. After retaining 20\% of this data for validation, we followed the procedure described for the synthetic data. 

\hyperref[fig:kdd-accuracy-comparison]{Figure 5} demonstrates the results of this experiment on the test set. \hyperref[alg:sparsifier]{\texttt{Sparsifier}} and the two-stage procedure performed well on this data, while private ADMM and private IGHT had higher average classification error for all levels of privacy. 

 Comparing the performance of the two-stage and private IGHT algorithms in \hyperref[fig:accuracy-comparison]{Figure 4} and \hyperref[fig:kdd-accuracy-comparison]{Figure 5} also implies some of the fundamental drawbacks of these methods. Specifically, the two-stage procedure performs badly in \hyperref[fig:accuracy-comparison]{Figure 4} but well in \hyperref[fig:kdd-accuracy-comparison]{Figure 5}. We believe this is because the features in the synthetic dataset are correlated and thus violate the mutual incoherence condition whereas those in the KDDCUP'99 dataset are less correlated. The private IGHT algorithm does not perform well on either, which we believe indicates that the condition number of the Hessian matrix is high on both datasets. Since the condition number is high, gradient updates in some directions may be larger than some of the important coefficients within the weight vector, but due to hard thresholding, these coefficients will be set to $0$ after an inaccurate update. 

\section{Conclusion}

In this paper, we presented a method for sparse private logistic regression which rested on two techniques. First, we employed a private LASSO-regularized Frank-Wolfe algorithm to produce an initial prediction of the output weight. To the best of our knowledge, we are also the first to prove a utility bound of this algorithm when using the binary cross-entropy loss. Then we ensured that the weight is sparse by keeping only an appropriate number of its coefficients nonzero. Experiments on synthetic and real-world datasets demonstrate that our method is effective at finding accurate solutions despite privacy constraints. 

Note that while we focus on sparse private logistic regression, \hyperref[alg:sparsifier]{\texttt{Sparsifier}} is capable of produce a sparse solution for any loss function to \hyperref[alg:private-frank-wolfe]{\texttt{Private LASSO}}. For example, it would work for our private LASSO-regularized multinomial logistic regression algorithm shown in \hyperref[sec:multinomial]{Appendix A}. 

However, there is one main limitation to our method. When trying to have a very low cumulative $\epsilon$ privacy guarantee, the value of $\epsilon_1$ must be very small, which can cause wide changes in the sparsity of the final output weight and may set many of its useful coefficients to $0$. We addressed this as best we could by noising the count variable $c$ with the Double-Geometric distribution instead of the higher-variance Laplace distribution, but since privacy is inherently generated from added noise, this phenomenon is unavoidable. 
\bibliographystyle{splncs04}
\bibliography{bibliography}

\newpage
\appendix
\section{Sparse Private Multinomial Logistic Regression}
\label{sec:multinomial}
\subsection{Introduction}
Multinomial logistic regression is an extension of the traditional logistic regression technique to produce multiclass predictions. Specifically, given features $\{\mathbf{x}_1, \ldots, \mathbf{x}_n \} \in \mathbb{R}^{p}$ and a one-hot encoded label $\{\mathbf{y}_1, \ldots \mathbf{y}_n \} \in \{0, 1\}^{K}$, LASSO-regularized multinomial logistic regression solves 
$$\widehat{\mathbf{W}} \in \argmin_{\mathbf{W} \in \mathbb{R}^{K \times p}: \ \lVert \mathbf{W} \rVert_1 \leq \lambda} \frac{1}{n} \sum_{i = 1}^{n} \sum_{k = 1}^{K} - y_{ik} \log \widehat{y}_{ik}$$
where $\widehat{y}_{ik} = \texttt{softmax}(\mathbf{W}\mathbf{x}_i)_k$ \cite{probabilistic-machine-learning}. 

\subsection{Differentially-Private Training Algorithm}

In this section, we present a variant of \hyperref[alg:private-frank-wolfe]{\texttt{Private LASSO}} which can be used for training multinomial logistic regression. In future sections, we analyze the privacy and utility of this algorithm. 

\begin{algorithm}[h]
    \caption{Multinomial Private LASSO}
    \label{alg:private-multinomial-frank-wolfe}
    \begin{algorithmic}
        \REQUIRE Privacy Parameters: $\epsilon > 0, 0 < \delta \leq 1$; Constraint Parameter: $\lambda > 0$; Iteration Parameter: $T$; Dataset: $D$ where $\lVert \mathbf{x}_i \rVert_\infty \leq 1$; Loss Function: $\mathcal{L}(\mathbf{W}; D) = \frac{1}{n} \sum_{i = 1}^{n} \mathcal{L}(\mathbf{W}; d_i)$ where $\mathcal{L}(\mathbf{W}; d_i)$ is $L$-Lipschitz with respect to the $L_1$ norm. 
        \STATE
        \STATE $S \gets \lambda * \{\text{Vertices of the Unit } L_1 \text{ Ball}\}$
        \STATE $\widehat{\mathbf{W}}^{(1)} \gets \mathbf{0}$
        \FOR{$t = 1$ to $T - 1$} 
            \STATE $\mathbf{G} \gets \nabla \mathcal{L}(\widehat{\mathbf{W}}^{(t)}; D)$
            \FOR{$i = 1$ to $p$}
                \FOR{$\mathbf{s} \in S$}
                    \STATE $\alpha_\mathbf{s} \gets \langle \mathbf{s}, \mathbf{g}_{:i} \rangle + \text{Lap} \left( \frac{\lambda L \sqrt{8T \log(K/\delta)}}{n \epsilon / K} \right)$
                \ENDFOR
                \STATE $\widetilde{\mathbf{w}}_{:i}^{(t)} \gets \argmin_{\mathbf{s} \in S} \alpha_s$
            \ENDFOR
            \STATE $\widehat{\mathbf{W}}^{(t + 1)} \gets (1 - \mu_t)\widehat{\mathbf{W}}^{(t)} + \mu_t \widetilde{\mathbf{W}}^{(t)} \text{ where } \mu_t = \frac{2}{t + 2}$
        \ENDFOR
        \STATE Output $\widehat{\mathbf{W}}^{(T)}$
    \end{algorithmic}
\end{algorithm}

\begin{theorem}
\label{thm:multinomial-lipschitzness}
When each data point $\mathbf{x}_i$ satisfies $\lVert \mathbf{x}_i \rVert_\infty \leq 1$, $\mathcal{L}(\mathbf{W}; d_i) = \sum_{k = 1}^{K} -y_{ik} \log \widehat{y}_{ik}$ has Lipschitz constant $2$ with respect to the $L_1$ norm. 
\end{theorem}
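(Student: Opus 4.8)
The plan is to follow the structure of the proof of \hyperref[thm:lipschitzness]{Corollary 1}. First I would record that $\mathcal{L}(\mathbf{W}; d_i)$ is convex in $\mathbf{W}$: writing $c$ for the index of the single nonzero coordinate of the one-hot label $\mathbf{y}_i$, we have $\mathcal{L}(\mathbf{W}; d_i) = \log \sum_{k=1}^{K} \exp\big((\mathbf{W}\mathbf{x}_i)_k\big) - (\mathbf{W}\mathbf{x}_i)_c$, which is the convex log-sum-exp function composed with the linear map $\mathbf{W} \mapsto \mathbf{W}\mathbf{x}_i$ minus a linear term, hence convex, and it is clearly differentiable. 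Then, exactly as in \hyperref[thm:lipschitzness]{Corollary 1}, the result of Shalev-Shwartz \cite{convex-lipschitz} applies: the Lipschitz constant of $\mathcal{L}(\mathbf{W}; d_i)$ with respect to the $L_1$ norm on the (flattened) weight matrix equals the supremum over the feasible set of the dual norm of its gradient, and the dual of the $L_1$ norm is the entrywise $L_\infty$ norm. So it suffices to bound $\max_{l, j} \big| \partial \mathcal{L}(\mathbf{W}; d_i) / \partial W_{lj} \big|$ uniformly over $\mathbf{W}$ in the feasible set.

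Next I would compute this gradient via the standard softmax--cross-entropy identity \cite{probabilistic-machine-learning}: differentiating the log-sum-exp term in $W_{lj}$ yields $\widehat{y}_{il}\, x_{ij}$ and differentiating $(\mathbf{W}\mathbf{x}_i)_c$ yields $y_{il}\, x_{ij}$, so $\nabla_{\mathbf{W}} \mathcal{L}(\mathbf{W}; d_i) = (\widehat{\mathbf{y}}_i - \mathbf{y}_i)\mathbf{x}_i^\top$, i.e. its $(l, j)$ entry equals $(\widehat{y}_{il} - y_{il})\, x_{ij}$. This is the matrix analogue of the $(\widehat{y} - y_i)\mathbf{x}_i$ gradient used for binary cross-entropy in \hyperref[thm:lipschitzness]{Corollary 1}, and I would spell out this one-line derivation since the whole bound rests on it.

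Finally I would bound each entry: $\big| (\widehat{y}_{il} - y_{il})\, x_{ij} \big| \le \big( |\widehat{y}_{il}| + |y_{il}| \big)\, |x_{ij}| \le (1 + 1) \cdot 1 = 2$, using that softmax outputs lie in $[0, 1]$, one-hot label entries lie in $\{0, 1\}$, and $\lVert \mathbf{x}_i \rVert_\infty \le 1$; the same number arises as the total-variation gap $\lVert \widehat{\mathbf{y}}_i - \mathbf{y}_i \rVert_1 \le 2$ between the predicted distribution and the true one-hot label. Hence $\lVert \nabla_{\mathbf{W}} \mathcal{L}(\mathbf{W}; d_i) \rVert_\infty \le 2$ everywhere on the feasible set, and by the Shalev-Shwartz characterization $\mathcal{L}(\mathbf{W}; d_i)$ is $2$-Lipschitz with respect to the $L_1$ norm, which is the claim.

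The part I expect to need the most care is the bookkeeping around norms rather than any single estimate: I would make explicit that the constraint $\lVert \mathbf{W} \rVert_1 \le \lambda$ and the per-column linear-minimization step of \hyperref[alg:private-multinomial-frank-wolfe]{Algorithm 3} are consistent with treating $\mathbf{W}$ as a flattened vector when invoking Shalev-Shwartz's lemma, so that the dual norm appearing there is indeed the entrywise maximum. The other delicate point is the softmax Jacobian computation --- in particular that the cross terms cancel once contracted against the one-hot label, collapsing the gradient to $(\widehat{\mathbf{y}}_i - \mathbf{y}_i)\mathbf{x}_i^\top$ --- so I would carry out that step in full to avoid a sign or index error.
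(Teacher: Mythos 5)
Your proof is correct and follows essentially the same route as the paper: convexity of the loss and of the feasible set, the Shalev-Shwartz characterization of the $L_1$-Lipschitz constant via the dual norm of the gradient, the softmax cross-entropy gradient $(\widehat{\mathbf{y}}_i - \mathbf{y}_i)\mathbf{x}_i^\top$, and a final bound of $2$. The only cosmetic difference is the last step: the paper bounds the (row-sum) matrix norm via $\lVert \mathbf{x}_i(\widehat{\mathbf{y}}_i-\mathbf{y}_i)^\top\rVert_\infty = \lVert\mathbf{x}_i\rVert_\infty\,\lVert\widehat{\mathbf{y}}_i-\mathbf{y}_i\rVert_1 \le 2$, whereas you bound each gradient entry by $2$ directly with the triangle inequality, which gives the same constant.
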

\begin{proof}
We will use the same approach as the proof of \hyperref[thm:lipschitzness]{Theorem 2.1}. First we show that the feasible set $\lVert \mathbf{W} \rVert_1 \leq \lambda$ is convex. 

Choose non-negative numbers $\alpha$ and $\beta$ such that $\alpha + \beta = 1$. For any $\mathbf{W}_1$ and $\mathbf{W}_2$ in the feasible set, $\lVert \alpha \mathbf{W}_1 + \beta \mathbf{W}_2 \rVert_1 \leq \lVert \alpha \mathbf{W}_1 \rVert_1 + \lVert \beta \mathbf{W}_2 \rVert_1 \leq \alpha \lambda + \beta \lambda = \lambda$. Thus the feasible set is convex, and we satisfy the conditions required to apply Shalev-Shwartz's theorem described in \hyperref[thm:lipschitzness]{Theorem 2.1}. 

It can be shown that $\mathcal{L}(\mathbf{W}; d_i) = \sum_{k = 1}^{K} -y_{ik} \log \widehat{y}_{ik}$ is convex and $\nabla \mathcal{L}(\mathbf{W}; d_i) = \mathbf{x}_i (\widehat{\mathbf{y}}_i - \mathbf{y}_i)^{\text{T}}$ \cite{probabilistic-machine-learning}. We want to find the maximum of $\lVert \mathbf{x}_i (\widehat{\mathbf{y}}_i - \mathbf{y}_i)^{\text{T}} \rVert_\infty$. By the definition of the matrix infinity norm, $\lVert \mathbf{x}_i (\widehat{\mathbf{y}}_i - \mathbf{y}_i)^{\text{T}} \rVert_\infty = \lVert \mathbf{x}_i \rVert_\infty \lVert (\widehat{\mathbf{y}}_i - \mathbf{y}_i) \rVert_1$. 

Note that $\lVert \widehat{\mathbf{y}}_i - \mathbf{y}_i \rVert_1 = \lvert \widehat{y}_{i1} - y_{i1} \rvert + \dots + \lvert \widehat{y}_{iK} - y_{iK} \rvert$. Without loss of generality, say $y_{iK} = 1$, meaning that $\mathbf{x}_i$ is of class $K$. Then the above expression simplifies to $\sum_{k = 1}^{K - 1} \widehat{y}_{ik} + 1 - \widehat{y}_{iK}$. Since all $\widehat{y}_{ik}$ must be non-negative and $\sum_{k = 1}^{K} \widehat{y}_{ik} = 1$, we know that $\sum_{k = 1}^{K - 1} \widehat{y}_{ik} + 1 - \widehat{y}_{iK} \leq 2$. This means that $\lVert \mathbf{x}_i (\widehat{\mathbf{y}}_i - \mathbf{y}_i)^{\text{T}} \rVert_\infty \leq 2$, and thus $\mathcal{L}(\mathbf{W}; d_i)$ has Lipschitz constant 2. 
\end{proof}

\subsection{Privacy}

Here we show that \hyperref[alg:private-multinomial-frank-wolfe]{Algorithm 3: Multinomial Private LASSO} is $(\epsilon, \delta)$-differentially private. 

\begin{theorem}
\hyperref[alg:private-multinomial-frank-wolfe]{Algorithm 3: Multinomial Private LASSO} is differentially private with parameters $(\epsilon, \delta)$. 
\end{theorem}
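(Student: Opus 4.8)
The plan is to follow the privacy analysis Talwar et al.\ give for \hyperref[alg:private-frank-wolfe]{\texttt{Private LASSO}} \cite{private-frank-wolfe} and check that each ingredient survives the passage to a matrix-valued iterate. By the post-processing property of differential privacy \cite{post-processing-dp} it suffices to show that the sequence of noisy selections $\big(\widetilde{\mathbf W}^{(1)},\dots,\widetilde{\mathbf W}^{(T-1)}\big)$ produced inside the loop is $(\epsilon,\delta)$-differentially private, since $\widehat{\mathbf W}^{(T)}$ is obtained from this sequence by the convex-combination updates $\widehat{\mathbf W}^{(t+1)}=(1-\mu_t)\widehat{\mathbf W}^{(t)}+\mu_t\widetilde{\mathbf W}^{(t)}$, which never re-access $D$.

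The first substantive step is a sensitivity bound. For a fixed iteration $t$ and column index $i$, the noised quantities are the linear scores $\alpha_{\mathbf s}=\langle\mathbf s,\mathbf g_{:i}\rangle$ as $\mathbf s$ ranges over the $2K$ vertices of $\lambda$ times the $\ell_1$ ball in $\mathbb R^{K}$, so $\lVert\mathbf s\rVert_1=\lambda$. \hyperref[thm:multinomial-lipschitzness]{Theorem A.2} shows each per-example loss is $L$-Lipschitz (with $L=2$) in the norm dual to the constraint, which bounds the relevant norm of $\nabla\mathcal L(\mathbf W;d_j)$; averaging over the $n$ examples and moving to a neighboring dataset therefore changes the gradient column $\mathbf g_{:i}$ by at most $2L/n$ in that dual norm, and Hölder's inequality gives $\lvert\alpha_{\mathbf s}(D)-\alpha_{\mathbf s}(D')\rvert\le 2\lambda L/n$ for every vertex $\mathbf s$. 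Hence each inner \texttt{argmin} is a report-noisy-min / exponential-mechanism step over $2K$ candidates whose per-query sensitivity is $O(\lambda L/n)$.

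Next I would compose these steps. Within one Frank-Wolfe iteration the algorithm performs the $p$ column selections and touches all $K$ class-blocks of the gradient; I would charge the $K$ classes by splitting the budget so that each class receives $(\epsilon/K,\delta/K)$, then invoke the advanced composition theorem \cite{dp-composition} over the $T$ iterations. Calibrating the Laplace scale of each score to $\tfrac{\lambda L\sqrt{8T\log(K/\delta)}}{n\,\epsilon/K}$ is precisely what makes the advanced-composition bound over $T$ exponential-mechanism steps at confidence $\delta/K$ collapse to $(\epsilon/K,\delta/K)$; basic composition across the $K$ class-blocks then yields $(\epsilon,\delta)$, matching the parameters claimed for the algorithm.

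The step I expect to be the main obstacle is exactly this bookkeeping, and in particular arguing that the $p$-fold inner loop does not inflate the privacy cost. One has to exploit the fact that the linear objective $\langle\mathbf S,\mathbf G\rangle=\sum_i\langle\mathbf s_{:i},\mathbf g_{:i}\rangle$ separates over columns and that the feasible set is a product of $\ell_1$ balls, so the $p$ per-column \texttt{argmin} operations jointly amount to a single selection over the product vertex set rather than $p$ queries composed on overlapping data; pinning down that this is what keeps the dependence in the noise scale at $K$ (from the class-blocks) rather than picking up an extra $p$ or $\sqrt p$ factor is the delicate part. Once the accounting is arranged so that each class-block run is $(\epsilon/K,\delta/K)$-private, the conclusion is immediate from the chosen Laplace parameter and basic composition.
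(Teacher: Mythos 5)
Your end accounting---each class block run at $(\epsilon/K,\delta/K)$, then composition across the $K$ blocks---is exactly the paper's proof, which simply cites Talwar et al.\ for the per-block guarantee rather than re-deriving the sensitivity and advanced-composition steps as you do. The genuine problem is the step you yourself flag as delicate: the claim that the $p$ per-column noisy argmins ``jointly amount to a single selection over the product vertex set'' and therefore cost no extra privacy. That argument does not go through. Parallel composition helps only when the \emph{data} is partitioned; here the output coordinates are partitioned, but every column of the gradient is computed from the whole dataset, so releasing $p$ independent report-noisy-min results is a $p$-fold composition of mechanisms on the same data. Equivalently, if you view it as one selection over the product set with the separable score $\sum_i \langle \mathbf{s}_{:i}, \mathbf{g}_{:i}\rangle$, the per-candidate sensitivity is the sum of the per-column sensitivities---a single neighboring-data change can shift every column of the average gradient by $\Theta(1/n)$ simultaneously (the per-example gradient is $\mathbf{x}_i(\widehat{\mathbf{y}}_i-\mathbf{y}_i)^{\text{T}}$, which can be dense)---so the sensitivity is $O(p\lambda L/n)$, not $O(\lambda L/n)$, and you pick up exactly the factor $p$ you were trying to avoid.

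The paper sidesteps this entirely by decomposing the output by class rows rather than feature columns: each row $\widehat{\mathbf{w}}_{k:}^{(T)}$ is the output of the ordinary binary \texttt{Private LASSO} for that class---one selection per iteration over the $2p$ vertices of the $p$-dimensional scaled $\ell_1$ ball---with Laplace scale $\frac{\lambda L\sqrt{8T\log(K/\delta)}}{n\epsilon/K}$, i.e.\ Talwar et al.'s calibration with $\epsilon\mapsto\epsilon/K$ and $\delta\mapsto\delta/K$. So each iteration performs $K$ selections (not $p$), each class's run is $(\epsilon/K,\delta/K)$-differentially private by the cited result, and $K$-fold basic composition gives $(\epsilon,\delta)$; no product-set argument is needed. (The pseudocode's inner-loop indexing should be read in this per-class way; under your literal per-feature reading with $2K$-vertex selections, the stated noise scale would not support the theorem.) If you restructure your argument around the per-class decomposition, the rest of your proposal---the $O(\lambda L/n)$ per-selection sensitivity from Lipschitzness and H\"older, advanced composition over $T$ within a class, basic composition over classes---is sound and coincides with what the citation to Talwar et al.\ delivers.
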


\begin{proof}
According to Talwar et al., each vector $\widehat{\mathbf{w}}_{k:}^{(T)}$ is $\left(\frac{\epsilon}{K}, \frac{\delta}{K}\right)$-differentially private \cite{private-frank-wolfe}. Since $\widehat{\mathbf{W}}^{(T)}$ publishes $K$ of these vectors, the composition theorem of differential privacy indicates that $\widehat{\mathbf{W}}^{(T)}$ is $(\epsilon, \delta)$-differentially private \cite{dp-composition}. 
\end{proof}

\newpage
\section{Implementing Sparse Private LASSO Logistic Regression}
\label{sec:binary-implementation}
The following is a python code framework which can be used to implement \hyperref[alg:private-frank-wolfe]{\texttt{Private LASSO}} and \hyperref[alg:sparsifier]{\texttt{Sparsifier}}. The use of linear algebra and sparse matrices makes the inner loop of \hyperref[alg:private-frank-wolfe]{\texttt{Private LASSO}} much more efficient. Note that \texttt{y} is expected to be a column vector with shape \texttt{(num\_samples, 1)}. 
\begin{minted}[
frame=lines,
framesep=2mm,
baselinestretch=1.2,
fontsize=\footnotesize,
linenos
]{python}
import numpy as np
from scipy import sparse


def nonprivate_lasso(X, y, iterations, l1_bound):
    n, d = X.shape

    constraint = l1_bound * sparse.vstack(
        (sparse.identity(d, format="csc"), -sparse.identity(d, format="csc")),
        format="csc",
    )
    w = np.zeros((d, 1))

    for t in range(1, iterations + 1):
        sigmoid_X = 1 / (1 + np.exp(-(X @ w)))
        grad = (1 / n) * (sigmoid_X - y).T @ X

        directional_deriv = constraint @ grad.T
        w_tilde = constraint[np.argmin(directional_deriv), :].T

        w = w + (2 / (t + 2)) * (w_tilde - w)

    return w


def private_lasso(X, y, iterations, l1_bound, epsilon, delta):
    n, d = X.shape

    constraint = l1_bound * sparse.vstack(
        (sparse.identity(d, format="csc"), -sparse.identity(d, format="csc")),
        format="csc",
    )
    w = np.zeros((d, 1))

    for t in range(1, iterations + 1):
        sigmoid_X = 1 / (1 + np.exp(-(X @ w)))
        grad = (1 / n) * (sigmoid_X - y).T @ X

        directional_deriv = constraint @ grad.T
        directional_deriv += np.random.laplace(
            scale=(l1_bound * np.sqrt(8 * iterations * np.log(1 / delta)))
            / (n * epsilon), size=(2 * d, 1))
        w_tilde = constraint[np.argmin(directional_deriv), :].T

        w = w + (2 / (t + 2)) * (w_tilde - w)

    return w

    
def sparsifier(X, y, iterations, l1_bound, alpha, beta, 
               epsilon_1, epsilon_2, delta, rho):
    
    w_np = nonprivate_frank_wolfe(X, y, 50000, l1_bound)
    
    c = np.clip(np.count_nonzero(w_np), alpha, beta)
    p = 1 - np.exp(-epsilon_1 / (beta - alpha))
    c = round(c + np.random.geometric(p) - np.random.geometric(p))
    c = np.clip(np.count_nonzero(w_np), alpha, beta)
    c = c * rho
    if c < 0:
        c = 0
    if c > X.shape[1]:
        c = X.shape[1]
    
    w_p = private_frank_wolfe(X, y, iterations, l1_bound, epsilon_2, delta)
    w_p = np.squeeze(np.asarray(w_p))
    abs_w_p = np.abs(w_p)
    sparse_w_p = np.zeros(abs_w_p.shape)
    if c > 0:
        ind = np.argpartition(abs_w_p, -1 * c)[-1 * c:]
        sparse_w_p[ind] = w_p[ind]
    
    return sparse_w_p
\end{minted}
\end{document}